\documentclass[twocolumn,twoside]{IEEEtran}
\usepackage{url}
\usepackage{graphicx}
\usepackage{caption}
\usepackage{array}
\usepackage{subfig}
\usepackage{cite}
\usepackage{stmaryrd}
\usepackage{amsfonts,latexsym,amssymb}
\usepackage[cmex10]{amsmath}
\usepackage{algorithm}
\usepackage{algorithmic}
\interdisplaylinepenalty=2500
\usepackage{hyperref}
\usepackage{multirow}
\usepackage{arydshln}
\usepackage{amsthm}
\usepackage[OT1]{fontenc}
\usepackage{mathtools}
\usepackage{enumerate}
\usepackage{booktabs}
\newtheorem{definition}{{Definition}}

\newtheorem{theorem}{{Theorem}}
\newtheorem{proposition}{{Proposition}}


\begin{document}

\title{An Efficient Tensor Completion Method via New Latent Nuclear Norm}

\author{Jinshi Yu$^{1}$, Weijun Sun$^{1}$, Yuning Qiu$^{1}$, Shengli Xie$^{1}, \IEEEmembership{IEEE Fellow}$

\thanks{$^{1}$Faculty of Automation, Guangdong University of Technology, Guangzhou, 510006, China.}}



%
%


\maketitle


\begin{abstract}
In tensor completion, the latent nuclear norm is commonly used to induce low-rank structure, while substantially failing to capture the global information due to the utilization of unbalanced unfolding scheme. To overcome this drawback, a new latent nuclear norm equipped with a more balanced unfolding scheme is defined for low-rank regularizer. Moreover, the new latent nuclear norm together with the Frank-Wolfe (FW) algorithm is developed as an efficient completion method by utilizing the sparsity structure of observed tensor.  Specifically, both FW linear subproblem and line search only need to access the observed entries, by which we can instead maintain the sparse tensors and a set of  small basis matrices during iteration. 
Most operations are based on sparse tensors, and the closed-form solution of FW linear subproblem can be obtained from rank-one SVD.
We theoretically analyze the space-complexity and time-complexity of the proposed method, and show that it is much more efficient over other norm-based completion methods for higher-order tensors. Extensive experimental results of visual-data inpainting demonstrate that the proposed method is able to achieve state-of-the-art performance at smaller costs of time and space, which is very meaningful for the memory-limited equipment in practical applications.

\end{abstract}
\begin{IEEEkeywords}
Tensor completion, tensor ring decomposition, tensor ring rank, latent nuclear norm, image/video inpainting.
\end{IEEEkeywords}
\IEEEpeerreviewmaketitle
\section{Introduction}
In the past decades, tensor completion has aroused increasing attention due to its wide applications in a variety of fields, such as computer vision \cite{geona2019entropy, liu2019low, gandy2011tensor, kressner2014low, pang2009robust, xu2015parallel, romera2013new, kajo2018incremental}, multi-relational link prediction \cite{liu2014trace, jenatton2012latent, guo2017efficient}, and recommendation system \cite{wimalawarne2014multitask, frolov2017tensor, shang2017fuzzy, ioannidis2018coupled}. The goal of tensor completion is to recover an incomplete tensor from partially observed entries, and the most existing methods try to achieve it via the low-rank structure assumption. To our best knowledge, these tensor completion methods can mainly be categorized into tensor decomposition based method and rank-minimization based method.

Tensor decomposition based method aims to decompose the incompleted tensor into a sequence of low-rank factors and then predict the missing entries via the latent factors. For example, the CANDECOMP/PARAFAC (CP) decomposition based methods \cite{bro1998multi, acar2011scalable, sorber2013optimization, yokota2016smooth, zhao2015bayesian, zhao2016bayesian} recover the target tensor by a summation of component rank-one tensors, and the Tucker decomposition based methods \cite{chen2013simultaneous, liu2014factor, filipovic2013tucker, liu2014generalized} via a core tensor multiplied by a low-rank matrix along each mode. In recent years, the Tensor-Train and Tensor-Ring decompositions are commonly used to express the higher-order incomplete tensor by a multilinear product over a sequence of low-order latent cores \cite{yuan2017completion, yuan2018high, wang2017efficient, yuan2018higher}. Unfortunately, the tensor decomposition based method is non-convex, may suffer from the problem of local solutions. In addition, most of the tensor decomposition based methods require predefined rank, and their performance is rather sensitive to the rank selection. For the Tucker, Tensor-Train, and Tensor-Ring decompositions, the rank is defined as a vector; it, therefore, requires a computational expensive cost to find the optimal rank due to the immense selections.  

Rank-minimization based method is another type of approach to exploit the low-rank structure of incompleted tensor. Since the tensor rank minimization $\text{rank}(\cdot)$ is an NP-hard problem, a number of norms are defined as the convex surrogates of tensor rank, and the most commonly used ones are overlapped nuclear norm \cite{liu2013tensor, bengua2017efficient, Yu2019tensor} and latent nuclear norm \cite{tomioka2013convex, wang2019latent}. In \cite{liu2013tensor}, the overlapped nuclear norm via Tucker rank was first proposed by assuming all modes are low-rank, while it performs poorly when the target tensor is only low-rank in a certain mode. In contrast to the overlapped nuclear norm, the latent nuclear norm \cite{tomioka2013convex} generalizes better, especially for the tensor with only several modes low-rank. However, these two norm regularizers are based on the unbalanced mode-$k$ unfolding scheme, and therefore the unfolding matrices are usually unbalanced. For a significantly-unbalanced matrix of size $m\times n$, the matrix rank substantially fails to capture the global information of the target tensor due to the small upper bound $\min\{m, n\}$. Considering the powerful capacity of Tensor Train decomposition for representing higher-order tensors, the overlapped and latent nuclear norms via Tensor Train are proposed in \cite{bengua2017efficient} and \cite{wang2019latent}, respectively. These two norms are still based on the unbalanced unfolding scheme, i.e., $k$-mode unfolding scheme (the first $k$ modes versus the rest). Though the Tensor Ring nuclear norm \cite{Yu2019tensor} applied a more balance scheme to unfold the target tensor, a set of weighting-parameters are needed to carefully tune, which spent an expensive cost. Finally, the above-mentioned norm regularizers are commonly minimized by the alternating direction method of multipliers (ADMM) and block coordinate descent (BCD) algorithms, where the computational expensive partial-SVD operation on a large dense matrix is usually required.

To address the above-mentioned drawbacks, this paper defines a new latent nuclear norm by using a more balanced unfolding scheme, which is shown more powerful over the other norm regularizers in exploiting the low-rank global information of the target tensor. It should be noted that, though we applied the same balanced unfolding scheme as the overlapped TR nuclear norm in the new norm, it needn't additional weighting-parameters for the unfolding matrices. 
Moreover, instead of simply utilizing the expensive ADMM or BCD algorithms, the Frank-Wolfe (FW) algorithm is developed to minimize the proposed latent nuclear norm for tensor completion.
Under the FW framework, we show that linear subproblem has a closed-form solution which can be obtained from the rank-one SVD, and most steps of the algorithm only need to access the observed entries. By utilizing sparsity of the observed tensor, we can only maintain the sparse tensor and small basis matrices instead of full-size tensors, thus require much smaller space in each iteration. Due to the proposed method operates on the sparse tensors and only need to perform rank-one SVD during iteration, it requires much smaller time-complexity over other tensor norms, which is discussed later. Furthermore, extensive experimental results of visual-data inpainting confirm that the proposed method is able to achieve state-of-the-art performance at smaller costs of time and space, which is very meaningful for the memory-limited equipment in practical applications. To sum up, the contributions of this paper are listed below:
\begin{itemize}
\item By using a more balanced unfolding scheme, a new latent nuclear norm is proposed, which is shown more powerful over other norm regularizers to exploit global information of the target tensor.
\item An efficient method, i.e. the new latent nuclear norm together with the Frank-Wolfe algorithm, is developed for tensor completion, which requires much smaller complexity over other tensor norms in terms of space and time.
\item The proposed method requires neither predefined rank nor additional weighting-parameters for the unfolding matrices and is empirically shown to achieve outstanding performance at smaller costs of time and space. This is very meaningful for the memory-limited equipment in practical applications.
\end{itemize}

The rest of this paper is organized as follows. The related works are described in Section \ref{related_works}. Notations and preliminaries required in this paper are introduced in Section \ref{Nota_and_Preli}. In Section \ref{model}, we define a new latent nuclear norm and develop an efficient Frank-Wolfe based algorithm. Moreover, the complexities of time and space are also theoretically analyzed. In Section \ref{experiments}, performance of the proposed method is investigated in synthetic data and real-world visual data. Finally, the work of this paper is concluded in section \ref{conclusion}.

\section{Related works}\label{related_works}
Our work is somewhat related to latent-norm based completion methods \cite{tomioka2013convex, wang2019latent} and Tensor-Ring based completion methods \cite{wang2017efficient, yuan2018tensor, Yu2019tensor}. In \cite{tomioka2013convex}, Tomioka et al. proposed the latent nuclear norm by mode-$k$ unfolding scheme (one mode versus the rest), and shown that it generalizes better than the overlapped nuclear norm \cite{liu2013tensor} when only several modes are low-rank. Since the mode-$k$ unfolding scheme is significantly-unbalanced, the unfolding matrix is usually unbalanced and the rank is often too small to describe the global information of target tensor.
 Recently, Wang et al. \cite{wang2019latent} defined a new latent nuclear norm via Tensor Train, however it may still base on the significantly-unbalanced matrix due to the unbalanced $k$-mode unfolding scheme. In recent years, Wang et al. \cite{wang2017efficient} first applied Tensor Ring decomposition by alternating least square (TR-ALS) to incomplete data. Yuan et al. \cite{yuan2018tensor} proposed a method, named Tensor Ring low-rank factors (TRLRF), by combining nuclear norm regularization and TR decomposition. However, these two TR-decomposition based methods require a large computational complexity per iteration and thus may run out of the memory when encountering the large-scale data. Moreover, the TR-rank is defined as a vector and it is therefore very challenging to manually find the optimal rank due to the immense selections. In \cite{Yu2019tensor}, Yu et al. defined an overlapped Tensor Ring nuclear norm by a more balanced unfolding scheme and showed that it substantially improves the recovery performance in visual-data inpainting. Unfortunately, its computational complexity is still large and a set of weighting-parameters require computational expensive tuning.

In contrast, the proposed latent nuclear norm is defined via a more balanced unfolding scheme and requires neither predefined rank nor additional weighting-parameters. Moreover, the new latent nuclear norm together with the FW algorithm is developed as an efficient method, which is shown more powerful to exploit global information at smaller costs of time and space.

\section{Notations and Preliminaries}\label{Nota_and_Preli}
\subsection{Notations}
This paper denotes scalars, vectors, and matrices by standard lowercase letters (e.g. $x, y, z$), boldface lowercase letters (e.g. ${\bf x, y, z}$), and bold capital letters (e.g. ${\bf X, Y, Z}$), respectively.  A tensor of order $N>3$ is denoted by Calligraphic letter, e.g. $\mathcal{X}\in\mathbb{R}^{I_{1}\times\cdots\times I_{N}}$. $\mathcal{X}(i_1, i_2, \cdots, i_N)$ or $x_{i_1, i_2, \cdots, i_N}$ represents an element of the index $(i_1, i_2, \cdots, i_N)$. $\mathcal{X}(:, i_2, \cdots, i_N)$ denotes a fiber along mode 1 and $\mathcal{X}(:, :, i_3, \cdots, i_N)$ a slice along mode 1 and mode 2, and so on. The inner product of $\mathcal{X}$ and $\mathcal{Y}$ of the same size is defined by $<\mathcal{X}, \mathcal{Y}>=\sum_{i_1,\cdots, i_N}^{I_1,\cdots, I_N}x_{i_1, i_2, \cdots, i_N}y_{i_1, i_2, \cdots, i_N}$, and the Frobenius norm of $\mathcal{X}$ can be calculated by $\|\mathcal{X}\|_{F}=\sqrt{<\mathcal{X}, \mathcal{Y}>}$.

\subsection{Preliminaries}
In this section, we briefly describe the Tensor Ring decomposition, Tensor Circular Unfolding, and their relation.

Tensor Ring decomposition \cite{zhao2016tensorring, zhao2019learning} is recently proposed to represent a higher-order tensor by a sequence of 3rd-order latent core tensors, i.e. TR-cores. Specifically, given an $N$th-order tensor $\mathcal{X}\in\mathbb{R}^{I_1\times I_2\times\cdots\times I_N}$, the TR-cores can be denoted by $\mathcal{G}_{k}\in\mathbb{R}^{R_{k-1}\times I_k\times R_{k}}$ and the TR-rank by the vector $[R_1, R_2,\cdots, R_N]^{\top}$, where $k = 1,\cdots, N$, $R_0=R_N$. Tensor Ring decomposition of $\mathcal{X}$ can be formally expressed by
\begin{align}\label{eq:TRD}
\mathcal{X}(i_1, i_2, \cdots, i_N) = Tr(\prod_{k=1}^{N}\mathcal{G}_{k}(:, i_k, :))
\end{align}
where $Tr({\cdot})$ is the matrix trace operation. More details of Tensor Ring decomposition can be seen in \cite{zhao2016tensorring, zhao2019learning}.

To efficiently exploit the global information of high-order tensors, Yu et al. \cite{Yu2018effiective, Yu2019tensor} defined a balance unfolding scheme named Tensor Circular Unfolding (TCU) in {\bf Definition \ref{def:kd_unfolding}} and described its relation with TR decomposition in {\bf Theorem \ref{theo:boundTR}}.
\begin{definition}\label{def:kd_unfolding}
(\emph{Tensor Circular Unfolding \cite{Yu2018effiective, Yu2019tensor}}) Suppose an $N$th-order tensor $\mathcal{X}\in\mathbb{R}^{I_{1}\times\dots\times I_{N}}$, the tensor circular unfolding matrix denoted by $\mathcal{X}_{<k, d>}\in\mathbb{R}^{I_{a}I_{a+1}\dots I_{k} \times I_{k+1}\dots I_{a-1}}$ can be represented by
\begin{align}
\mathcal{ X}_{<k, d>}(i_{a}i_{a+1}\dots i_{k},i_{k+1}\dots i_{a-1})=\mathcal{X}(i_{1},i_{2},\dots,i_{N})
\end{align}
where $d<N$ is a positive integer and
\begin{equation}\label{eq:t}
a = \left\{ \begin{array}{ll}
k-d+1, \ \ \quad\qquad d\le k ;\\
k-d+1+N \qquad  \text{otherwise} .
\end{array} \right.
\end{equation}
\end{definition}
\noindent The $d$ continuous modes $\{a, a+1,\cdots, k\}$ enumerate the rows of $\mathcal{X}_{<k, d>}$, and the rest modes its columns. To easily understand the Tensor Circular Unfolding scheme,  Fig. \ref{archi:kdUnfolding} illustrates the circularly-unfolding matrices $\{\mathcal{X}_{<k, 2>}\}_{k=1}^{5}$ obtained by unfolding $\mathcal{X}$ along modes $\{k-1, k\}$ specified by a red arc.
  
\begin{figure}
\begin{center}
\includegraphics[width=0.3\textwidth]{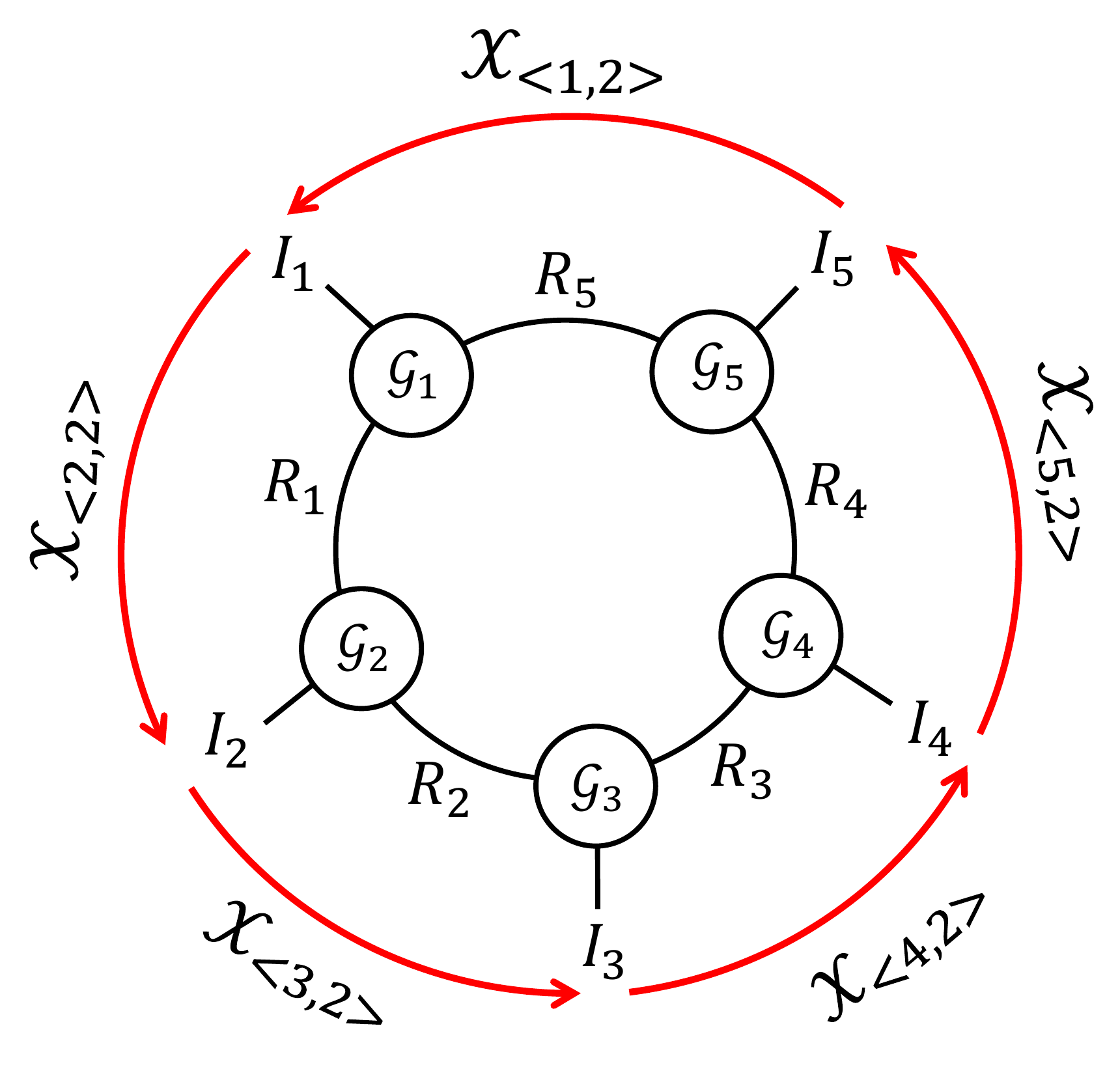}
\caption{Illustration of Tensor Ring representation of an $5$th-order tensor $\mathcal{X}\in\mathbb{R}^{I_1\times I_2\times I_3\times I_4\times I_5}$ and its Tensor Circular Unfoldings.
Each node of $\{\mathcal{G}_{k}\in\mathbb{R}^{r_{k-1}\times I_{k}\times r_{k}}\}_{k=1}^{5}$ denotes a tensor whose order decided by its number of edges. The edge connecting two nodes denotes a contraction between two tensors along a specific mode. The Tensor Circular Unfoldings $\{\mathcal{X}_{<k, 2>}\}_{k=1}^{5}$ are easily obtained by unfolding $\mathcal{X}$ along modes $\{k-1, k\}$ specified by a red arc.}\label{archi:kdUnfolding}
\end{center}
\end{figure}

\begin{theorem}\label{theo:boundTR}
Suppose $\mathcal{X}\in\mathbb{R}^{I_{1}\times\dots I_{N}}$ can be formulated by equation (\ref{eq:TRD}), then
\vspace{-0.2cm}\begin{align}
{\it rank}\left(\mathcal{X}_{<k, d>}\right)\le R_{k}R_{a-1},
\end{align}
\end{theorem}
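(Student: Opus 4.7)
The plan is to exploit the cyclic property of the trace together with the identity
\[
\operatorname{Tr}(AB) \;=\; \operatorname{vec}(A^{\top})^{\top}\,\operatorname{vec}(B)
\]
to factor the circular unfolding $\mathcal{X}_{<k,d>}$ as a product of two matrices whose inner dimension is exactly $R_k R_{a-1}$, which immediately yields the claimed rank bound.

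First, I would rewrite the TR expression in a form that isolates the $d$ consecutive modes $\{a,a+1,\ldots,k\}$ (the ones indexing the rows of $\mathcal{X}_{<k,d>}$) from the remaining modes $\{k+1,\ldots,a-1\}$ (indexing the columns, with cyclic wrap-around when $d>k$). Using the cyclic invariance of the trace in \eqref{eq:TRD}, define the two mode-dependent matrix products
\begin{align*}
\mathbf{L}_{i_a,\ldots,i_k} &:= \prod_{j=a}^{k}\mathcal{G}_{j}(:,i_j,:)\;\in\;\mathbb{R}^{R_{a-1}\times R_k},\\
\mathbf{R}_{i_{k+1},\ldots,i_{a-1}} &:= \prod_{j=k+1}^{a-1}\mathcal{G}_{j}(:,i_j,:)\;\in\;\mathbb{R}^{R_k\times R_{a-1}},
\end{align*}
where the product in $\mathbf{R}$ is taken cyclically (from $k+1$ through $N$ and back up to $a-1$ when $d\le k$, or simply from $k+1$ to $a-1$ when $d>k$). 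Then \eqref{eq:TRD} becomes $\mathcal{X}(i_1,\ldots,i_N) = \operatorname{Tr}\bigl(\mathbf{L}_{i_a,\ldots,i_k}\,\mathbf{R}_{i_{k+1},\ldots,i_{a-1}}\bigr)$.

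Next, I would apply the trace-as-inner-product identity to turn this scalar into a bilinear expression: $\operatorname{Tr}(\mathbf{L}\mathbf{R}) = \operatorname{vec}(\mathbf{L}^{\top})^{\top}\operatorname{vec}(\mathbf{R})$, where both vectors live in $\mathbb{R}^{R_k R_{a-1}}$. Assembling these vectors as the rows (resp.\ columns) of two matrices indexed by the multi-indices $\overline{i_a\cdots i_k}$ and $\overline{i_{k+1}\cdots i_{a-1}}$, I obtain a factorization
\[
\mathcal{X}_{<k,d>} \;=\; \mathbf{P}\,\mathbf{Q},
\]
with $\mathbf{P}\in\mathbb{R}^{(I_a\cdots I_k)\times R_k R_{a-1}}$ and $\mathbf{Q}\in\mathbb{R}^{R_k R_{a-1}\times(I_{k+1}\cdots I_{a-1})}$. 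Since $\operatorname{rank}(\mathbf{PQ})\le \min\{\operatorname{rank}(\mathbf{P}),\operatorname{rank}(\mathbf{Q})\}\le R_k R_{a-1}$, the theorem follows.

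The main obstacle I anticipate is purely bookkeeping rather than conceptual: carefully handling the two cases $d\le k$ and $d>k$ in Definition \ref{def:kd_unfolding} so that the cyclic product in $\mathbf{R}$ is written unambiguously, and verifying that the row/column multi-index orderings of $\mathbf{P}$ and $\mathbf{Q}$ match the orderings specified by TCU. Once the indices are aligned, the factorization $\mathbf{P}\mathbf{Q}$ is essentially automatic from the trace identity, so no nontrivial estimation is required.
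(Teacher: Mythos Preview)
Your argument is correct and is exactly the standard route used in the references the paper cites for this theorem (\cite{Yu2018effiective, Yu2019tensor}): split the cyclic TR product at the two boundaries $a-1$ and $k$, use trace cyclicity to write $\mathcal{X}(i_1,\ldots,i_N)=\operatorname{Tr}(\mathbf{L}\,\mathbf{R})$, and then vectorize to obtain a rank-$R_kR_{a-1}$ factorization of $\mathcal{X}_{<k,d>}$. Note that the present paper does not actually supply its own proof of Theorem~\ref{theo:boundTR}; it merely restates the result from those earlier works, so there is nothing further to compare against here.
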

\noindent This theorem theoretically reveals the relation of Tensor Circular Unfolding scheme and Tensor Ring decomposition, which implies that the low-rank global information can be exploited by Tensor Circular Unfolding scheme.

\section{Latent Tensor-Ring Nuclear Norm and Frank-Wolfe Based Alogrithm}\label{model}
\subsection{Latent Tensor-Ring Nuclear Norm}
As well-known, in tensor completion, most common definitions of the nuclear norm are overlapped nuclear norm and latent nuclear norm via Tucker/TT rank \cite{liu2013tensor, bengua2017efficient, tomioka2013convex, wang2019latent}. These nuclear norms are based on mode-$k$ unfolding scheme (one mode versus the rest) or $k$-modes unfolding scheme (the first $k$ modes versus the rest), and thus may construct significantly-unbalanced unfoldings. For a significantly-unbalanced matrix of size $m\times n$, enough large rank is usually required to describe the global information, while it fails due to the small upper bound $\min\{m, n\}$. Though TR nuclear norm \cite{Yu2019tensor} applied a more balanced unfolding scheme, i.e. Tensor Circular Unfolding (TCU), to exploit the global information and achieve a rather-well performance, its computational expensive selection of weighting-parameters seems inappropriate in practical applications. Moreover, we found that the performance of TR nuclear norm largely depends on the selection of its weighting-parameters. To solve the issues that the above mentioned nuclear norms have, a new nuclear norm named latent TR nuclear norm is defined as follow by using TCU scheme.
\begin{definition}\label{def:ltrnn}
(\emph{Latent Tensor-Ring Nuclear Norm}) Suppose an $N$th-order tensor $\mathcal{X}\in\mathbb{R}^{I_{1}\times\cdots\times I_{N}}$, the latent Tensor-Ring nuclear norm is
\begin{align}\label{eq:ltrnn}
\|\mathcal{X}\|_{ltrnn}=\min_{\sum_{k=1}^{N}\mathcal{X}_{k}=\mathcal{X}}\sum_{k=1}^{N}\|(\mathcal{X}_{k})_{<k,d>}\|_{*}
\end{align}
\end{definition}
 \noindent Note that, latent TR nuclear norm is defined as the infimum over $N$ tensors 
$\{\mathcal{X}_{k}\}_{k=1}^{N}$ which are respectively low-rank in the specific unfolding $(\mathcal{X}_{k})_{<k,d>}$.
 
Therefore, a new tensor completion model via latent Tensor-Ring nuclear norm is formulated as
\begin{align}\label{model:sltrnn}
&\min_{\mathcal{X}}\quad\|\mathcal{X}\|_{ltrnn}\nonumber\\
&s.t.\quad\mathcal{X}=\sum_{k=1}^{N}\mathcal{X}_{k}, \mathcal{X}_{\Omega} = \mathcal{T}_{\Omega}
\end{align}
where $\mathcal{T}\in\mathbb{R}^{I_{1}\times I_{2}\times\cdots\times I_{N}}$ and $\mathcal{X}^{I_{1}\times I_{2}\times\cdots\times I_{N}}$ are true tensor and reconstructed tensor, respectively. $\Omega$ denotes the index set of the observed entries, so $\mathcal{T}_{\Omega}$ represents the observed entries from the true tensor. $({\mathcal X}_{k})_{<k, d>}$ is the circularly-unfolded matrix with size $m_k\times n_k$ where $m_{k}=I_{a}I_{a+1}\cdots I_{k}$, $n_{k}=I_{k+1}I_{k+2}\cdots I_{a-1}$. Since the balanced unfolding scheme does help to catch the global information, $d$ is default set as $\left\lfloor \frac{N}{2}\right\rfloor$.

\subsection{Frank-Wolfe Based Algorithm}
Though the alternating direction method of multipliers (ADMM) and block coordinate descent (BCD) are usually used to solve the nuclear norm based completion model, they have to operate on the full-size tensors and perform partial-SVD during iterations \cite{liu2013tensor, yuan2018tensor, tomioka2010estimation, bengua2017efficient, Yu2019tensor, romera2013new, lu2019tensor}. This substantially requires large costs in time and space when encountering large-scale data. Similar to \cite{guo2017efficient}, this section instead develops the Frank-Wolfe \cite{frank1956algorithm, jaggi2013revisiting} based algorithm to solve the problem (\ref{eq:ltrnn}) by utilization of sparsity structure and rank-one SVD operation in each iteration, which will be shown much more efficient in time and space later. Under the Frank-Wolfe framework, we first transform (\ref{model:sltrnn}) into 
\begin{align}\label{FW-model}
&\min_{\mathcal{X}} F(\mathcal{X})\equiv\frac{1}{2}\| \mathcal{P}_{\Omega}(\mathcal{X})-\mathcal{P}_{\Omega}(\mathcal{T})  \|_{F}^{2}\nonumber\\
&s.t.\quad \| \mathcal{X}\|_{ltrnn}\le\beta
\end{align}
where $\mathcal{P}_{\Omega}(\mathcal{X})$ is a tensor with $\left[\mathcal{P}_{\Omega}(\mathcal{X})\right]_{i_1,\dots,i_N}=\mathcal{X}_{i_1,\dots,i_N}$ if $(i_1,\dots,i_N)\in\Omega$, and 0 otherwise. $\beta>0$ is a constraint parameter. Then we solve the problem (\ref{FW-model}) via the following three steps:
\begin{enumerate}[1)]
\item Linear subproblem $\mathcal{S}^{(t+1)}:=\text{arg}\min_{\mathcal{S}\in \mathcal{D}}<\mathcal{S}, \nabla F(\mathcal{X}^{(t)})>$.
\item Line search $\gamma^{t+1}:=\text{arg}\min_{\gamma\in[0,1]}F(\mathcal{X}^{(t)}+\gamma(\mathcal{S}^{(t+1)}-\mathcal{X}^{(t)}))$.
\item Update $\mathcal{X}^{(t+1)}:=(1-\gamma^{t+1})\mathcal{X}^{(t)}+\gamma\mathcal{S}^{(t+1)}.$
\end{enumerate}
where $\nabla F(\mathcal{X}^{(t)})$ is the gradient of $F(\mathcal{X}^{(t)})$ w.r.t. $\mathcal{X}^{(t)}$. $\mathcal{D}:=\{ \mathcal{S}\in\mathbb{T}|\| \mathcal{S} \|_{ltrnn}\le \beta \}$ is compact and convex.
\\ \\
{\bf Linear subproblem of} $\mathcal{S}^{(t+1)}$. For the linear subproblem, $\mathcal{S}^{(t+1)}:=\text{arg}\min_{\mathcal{S}\in \mathcal{D}}<\mathcal{S}, \nabla F(\mathcal{X}^{(t)})>$, {\bf Proposition \ref{Proposition:S-update}} shows that the closed-form solution can be obtained efficiently from rank-one SVD. 
\begin{proposition}\label{Proposition:S-update}
The closed-form solution of the linear problem $\mathcal{S}^{(t+1)}:=\emph{arg}\min_{\mathcal{S}\in \mathcal{D}}<\mathcal{S}, \nabla F(\mathcal{X}^{(t)})>$ can be given by
\begin{align}\label{FW-S-update}
\mathcal{S}^{(t+1)}= \emph{fold}_{k^*}(\beta{\bf u}_{k^{*}}{\bf v}_{k^{*}}^{\top})
\end{align}
where $k^{*}=\emph{arg}\max_{k\in\mathcal{D}}\sigma_{\text{max}}(-\nabla F(\mathcal{X})_{<k,d>})$, $({\bf u}_{k^{*}},{\bf v}_{k^{*}})$ denote a pair of left and right singular vectors corresponding to the largest singular value $\sigma_{\text{max}}(-\nabla F(\mathcal{X})_{<k^{*},d>})$.
\end{proposition}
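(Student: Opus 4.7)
The plan is to exploit the variational definition of the latent TR nuclear norm together with the standard nuclear--spectral matrix duality $|\langle A,B\rangle|\le\|A\|_*\,\sigma_{\text{max}}(B)$. First I would represent any feasible $\mathcal{S}\in\mathcal{D}$ in latent form $\mathcal{S}=\sum_{k=1}^{N}\mathcal{S}_k$ with $\sum_{k=1}^{N}\|(\mathcal{S}_k)_{<k,d>}\|_*\le\beta$; such a decomposition exists by the infimum in (\ref{eq:ltrnn}). Since the circular unfolding merely rearranges entries, $\langle\mathcal{S}_k,\nabla F(\mathcal{X}^{(t)})\rangle=\langle(\mathcal{S}_k)_{<k,d>},(\nabla F(\mathcal{X}^{(t)}))_{<k,d>}\rangle$, so linearity gives
\begin{equation*}
\langle\mathcal{S},\nabla F(\mathcal{X}^{(t)})\rangle=\sum_{k=1}^{N}\bigl\langle(\mathcal{S}_k)_{<k,d>},(\nabla F(\mathcal{X}^{(t)}))_{<k,d>}\bigr\rangle.
\end{equation*}

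Next I would apply the duality bound $-\langle A,B\rangle\le\|A\|_*\,\sigma_{\text{max}}(-B)$ termwise, then pull the largest spectral norm across $k$ out of the sum and use $\sum_k\|(\mathcal{S}_k)_{<k,d>}\|_*\le\beta$ to conclude
\begin{equation*}
\langle\mathcal{S},\nabla F(\mathcal{X}^{(t)})\rangle\;\ge\;-\beta\,\max_{k}\sigma_{\text{max}}\bigl(-(\nabla F(\mathcal{X}^{(t)}))_{<k,d>}\bigr),
\end{equation*}
which equals $-\beta\,\sigma_{\text{max}}(-(\nabla F(\mathcal{X}^{(t)}))_{<k^{*},d>})$ by the definition of $k^{*}$. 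This is a universal lower bound on the linear objective over $\mathcal{D}$.

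The third step is to verify that the proposed $\mathcal{S}^{(t+1)}=\mathrm{fold}_{k^*}(\beta{\bf u}_{k^*}{\bf v}_{k^*}^{\top})$ achieves this bound. Taking $\mathcal{S}_{k^{*}}=\mathcal{S}^{(t+1)}$ and $\mathcal{S}_k={\bf 0}$ for $k\ne k^{*}$ is a valid decomposition in (\ref{eq:ltrnn}); since $\beta{\bf u}_{k^*}{\bf v}_{k^*}^{\top}$ is rank-one with singular value $\beta$, its nuclear norm equals $\beta$, so $\mathcal{S}^{(t+1)}\in\mathcal{D}$. A direct computation using the singular-vector relations $(-\nabla F(\mathcal{X}^{(t)}))_{<k^{*},d>}{\bf v}_{k^{*}}=\sigma_{\text{max}}(\cdot)\,{\bf u}_{k^{*}}$ gives $\langle\mathcal{S}^{(t+1)},\nabla F(\mathcal{X}^{(t)})\rangle=-\beta\,\sigma_{\text{max}}(-(\nabla F(\mathcal{X}^{(t)}))_{<k^{*},d>})$, matching the lower bound and thus certifying optimality.

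The main obstacle I anticipate is the bookkeeping around the latent decomposition: one must argue either that the infimum defining $\|\cdot\|_{ltrnn}$ is attained, or, more cleanly, that the chain of inequalities above holds for \emph{every} feasible decomposition so that the bound passes to the infimum without requiring attainment. Once this technicality is settled, the proposition reduces to the well-known fact that the dual norm of a sum-of-nuclear-norms (latent-type) composite norm is the maximum of the component spectral norms, for which the rank-one SVD provides the extremal atom; folding back via $\mathrm{fold}_{k^{*}}$ lifts this matrix atom to the tensor domain.
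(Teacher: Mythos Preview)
Your proposal is correct and follows essentially the same duality argument as the paper: bound $\langle\mathcal{S},\nabla F(\mathcal{X}^{(t)})\rangle$ below by $-\beta$ times the dual of the latent TR norm, identify that dual as $\max_k\sigma_{\text{max}}\bigl(-(\nabla F(\mathcal{X}^{(t)}))_{<k,d>}\bigr)$, and exhibit the rank-one folded atom that attains the bound. The only difference is that the paper invokes the dual-norm identity by citing Tomioka and Suzuki, whereas you derive it directly from the latent decomposition and the termwise nuclear--spectral inequality; your version is therefore more self-contained and also makes the feasibility check and the attainment-of-infimum issue explicit, points the paper's proof glosses over.
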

\begin{proof}
Let $\|\mathcal{S}\|_{ltrnn}$ be the latent TR norm of $\mathcal{S}$, then its dual norm can be defined as 
\begin{align}
\|\nabla F(\mathcal{X})\|_{ltrnn}^{*}
&= \max_{\|S\|_{ltrnn}=1}|<\mathcal{S}, \nabla F(\mathcal{X}^{(t)})>|\nonumber\\
&= \max_{\|S\|_{ltrnn}\neq  0}\frac{|<\mathcal{S}, \nabla F(\mathcal{X}^{(t)})>|}{\|S\|_{ltrnn}}.
\end{align}
From this definition and constraint $\| \mathcal{S} \|_{ltrnn}\le \beta$, it is easy to get that
\begin{align}
<\mathcal{S}, \nabla F(\mathcal{X}^{(t)})>
&\ge-\|S\|_{ltrnn}\|\nabla F(\mathcal{X})\|_{ltrnn}^{*}\nonumber\\
&\ge-\beta\|\nabla F(\mathcal{X})\|_{ltrnn}^{*}.
\end{align}
Note that, according to \cite{tomioka2013convex}, the dual norm $\|\nabla F(\mathcal{X})\|_{ltrnn}^{*}$ can be given by
\begin{align}
\|\nabla F(\mathcal{X})\|_{ltrnn}^{*}
&=\max_{d}\|-\nabla F(\mathcal{X})_{<k,d>}\|_{\infty}\nonumber\\
&=\sigma_{{max}}(-\nabla F(\mathcal{X})_{<k^*,d>})
\end{align}
where $\|-\nabla F(\mathcal{X})_{<k,d>}\|_{2}$ denotes the spectral norm, i.e., the greatest singular value of $-\nabla F(\mathcal{X})_{<k,d>}$. Hence, 
\begin{align}
&<\mathcal{S}_{<k^*,d>}, \nabla F(\mathcal{X}^{(t)})_{<k^*,d>}>\nonumber\\
&=<\mathcal{S}, \nabla F(\mathcal{X}^{(t)})>
\ge-\beta\sigma_{{max}}(-\nabla F(\mathcal{X})_{<k^*,d>})
\end{align}
It is not difficult to find that the minimum of $<\mathcal{S}, \nabla F(\mathcal{X}^{(t)})>$ is obtained when
\begin{align}
\mathcal{S}_{<k^*,d>}=\beta{\bf u}_{k^{*}}{\bf v}_{k^{*}}^{\top}
\end{align}
Therefore, we can get $\mathcal{S}^{(t+1)}= \text{fold}_{k^*}(\beta{\bf u}_{k^{*}}{\bf v}_{k^{*}}^{\top})$.
\end{proof}

\noindent Seen from the problem (\ref{FW-model}), it is easy to check that $\nabla F(\mathcal{X}^{(t)})=\mathcal{P}_{\Omega}(\mathcal{X})-\mathcal{P}_{\Omega}(\mathcal{T})$, and its rank-one SVD can be computed efficiently by the power method in \cite{halko2011finding}.
\\  \\
{\bf Line search of} $\mathcal{\gamma}^{(t+1)}$. With $F$ in problem (\ref{FW-model}), the step-size $\gamma^{t+1}$ can be given by solving the following problem:
\begin{align}\label{FW-gamma}
\gamma^{t+1}:=\text{arg}\min_{\gamma\in[0,1]}\| \mathcal{P}_{\Omega}( \mathcal{X}^{(t)}+\gamma(\mathcal{S}^{(t+1)}-\mathcal{X}^{(t)}) - \mathcal{T}) \|_{F}^{2}
\end{align}
Note that the problem (\ref{FW-gamma}) is essentially a quadratic equation of $\gamma$, i.e.,
\begin{align}\label{FW-gamma-quadratic}
\gamma^{t+1}:=\text{arg}\min_{\gamma\in[0,1]}(\hat{a}\gamma^{2}+\hat{b}\gamma+\hat{c})
\end{align}
where $\hat{a}=\| \mathcal{P}_{\Omega}(\mathcal{S}^{(t+1)}-\mathcal{X}^{(t)}) \|_{F}^{2}$, $\hat{b}=2< \mathcal{P}_{\Omega}(\mathcal{X}^{(t)}-\mathcal{T}) ,\mathcal{P}_{\Omega}(\mathcal{S}^{(t+1)}-\mathcal{X}^{(t)}) >$, $\hat{c}=\| \mathcal{P}_{\Omega}(\mathcal{X}^{(t)}-\mathcal{T})\|_{F}^{2} $.
Hence, it is easy to get a simple closed-form solution:
\begin{align}\label{FW-gamma-update}
\gamma^{(t+1)} =
\left\{ \begin{array}{lll}
0                        &-\frac{\hat{b}}{2\hat{a}}\in(-\infty,0);\\[0.05cm]
-\frac{\hat{b}}{2\hat{a}}   & -\frac{\hat{b}}{2\hat{a}}\in[0,1];\\[0.05cm]
1                        &-\frac{\hat{b}}{2\hat{a}}\in(1,+\infty)
\end{array} \right.
\end{align}
\\
{\bf Update} $\mathcal{X}^{(t+1)}$. Note that the update of $\gamma^{(t+1)}$ only needs to access the entries indexed by $\Omega$, i.e., $\mathcal{S}_{\Omega}^{(t+1)},\mathcal{X}_{\Omega}^{(t)}$. Hence, instead of calculating and storing the full tensors during iterations, we can follow an efficiently update scheme proposed in \cite{guo2017efficient}. This efficiently update scheme consists of two steps. The step 1 is to only store the sparse tensors $\mathcal{S}^{(t+1)}_{\Omega}, \mathcal{X}^{(t+1)}_{\Omega}$ and the basis matrices $\{{\bf U}_{k}\in\mathbb{R}^{m_{k}\times R_{k}}, {\bf \Sigma}_{k}\in\mathbb{R}^{R_{k}\times R_{k}}, {\bf V}_{k}\in\mathbb{R}^{n_{k}\times R_{k}}\}_{k=1}^{N}$ satisfied that $\mathcal{X}^{(t+1)}=\sum_{k=1}^{N}\text{fold}_{k}({\bf U}_{k}{\bf \Sigma}_{k}{\bf V}_{k}^{\top})$. Specifically,
\begin{align}
&\quad\mathcal{S}^{(t+1)}_{\Omega}= \left (\text{fold}_{k}(\beta{\bf u}_{k^{*}}{\bf v}_{k^{*}}^{\top})\right)_{\Omega}\label{FW-SOmega-update}\\[0.1cm]
&\quad\mathcal{X}^{(t+1)}_{\Omega}=(1-\gamma^{(t+1)})\mathcal{X}^{(t)}_{\Omega}+\gamma^{(t+1)}\mathcal{S}^{(t+1)}_{\Omega}\label{FW-XOmega-update}\\[0.1cm]
&\left\{\begin{array}{llll}
{\bf\Sigma}_{k}=(1-\gamma^{(t+1)}){\bf\Sigma}_{k},\quad k\ne k^{*},\\[0.05cm]
{\bf\Sigma}_{k^{*}}=\left[ \begin{array}{cc}
(1-\gamma^{(t+1)}){\bf\Sigma}_{k^{*}} & {\bf 0}\\[0.05cm]
{\bf 0} & \gamma^{t+1}\beta
\end{array} 
\right ],\\
{\bf U}_{k^{*}}=[ \begin{array}{cc}{\bf U}_{k^{*}} &{\bf u}_{k^{*}} \end{array} ],\\[0.05cm]
{\bf V}_{k^{*}}=[ \begin{array}{cc}{\bf V}_{k^{*}} &{\bf v}_{k^{*}} \end{array} ],
\end{array}\right.\label{FW-basis-update}
\end{align}
where $\{ {\bf U}_{k}, {\bf \Sigma}_{k},{\bf V}_{k}\}_{k=1}^{N}$ are initialized to empty matrices. It is not difficult to check that the above formulas satisfy the update of $\mathcal{X}^{(t+1)}:=(1-\gamma^{(t+1)})\mathcal{X}^{(t)}+\gamma^{(t+1)}\mathcal{S}^{(t+1)}$.
Step 2 is using a trick shown in {\bf Algorithm \ref{alg:ReducingSize}} to reduce the size of the basis matrices without considerably increasing the objective function value $F(\mathcal{X})$ when $\sum_{N}^{k=1}R_{k}>\bar{R}$, where $\bar{R}$ is a given threshold. This trick avoids the problem that the basis matrices gradually increase in size and then cause memory-explosion.

We summarize the complete procedure in {\bf Algorithm \ref{alg:FW}}. Since the algorithm only accesses the observed entries of $\mathcal{S},\mathcal{X}$ and require rank-one SVD operation, it is efficient in terms of both space and time. 

\renewcommand{\algorithmicrequire}{\textbf{Input:}}
\renewcommand{\algorithmicensure}{\textbf{Output:}}
\begin{algorithm}[htb]
\caption{Reducing the size of basis matrices.}
\label{alg:ReducingSize}
\begin{algorithmic}[1]
\REQUIRE
       { $\{ {\bf U}_{k}, {\bf \Sigma}_{k},{\bf V}_{k}\}_{k=1}^{N}$; }

 \STATE{ Initialize: zero filled $\mathcal{X}$ with $\mathcal{X}_{\Omega} =\mathcal{T}_{\Omega}$}, random initialized $\mathcal{X}_{k}$, $\mathcal{M}_{k}=0$, $\mathcal{Y}_{k}=0$, $\mathcal{Y}=0$.

 \FOR{$k=1 $ \TO $N$}
 \STATE{$[{\bf Q_{U}}, {\bf R_{U}}]=\text{QR}({\bf U}_{k})$, $[{\bf Q_{V}}, {\bf R_{V}}]=\text{QR}({\bf V}_{k})$;}
 \STATE{${\bf J}_{0}={\bf R_{U}}{\bf \Sigma}_{k}{\bf R}_{\bf V}^{\top}$, $\mathcal{B}_{k}=\mathcal{P}_{\Omega}(\sum_{l\ne k}\text{fold}_{l}({\bf U}_{l}{\bf \Sigma}_{l}{\bf V}_{l}))$;}
 \STATE{${\bf J}=\text{arg}\min_{{\bf J}:\|{\bf J}\|_{*}\le \|{\bf J}_{0}\|_{*}}\| \mathcal{P}_{\Omega}(\text{fold}_{k}({\bf Q_{U}}{\bf J}{\bf Q}_{V}^{\top}) +\mathcal{B}_{k})\|_{F}^{2}$;}
 \STATE{${\bf U_{J}}{\bf \Sigma_{J}}{\bf V}_{\bf J}^{\top}=\text{SVD}({\bf J}) $;}
 \STATE{$ {\bf U}_{k}={\bf Q_{U}}{\bf U_{J}}, {\bf V}_{k}={\bf Q_{V}}{\bf V_{J}}, {\bf \Sigma}_{k}={\bf \Sigma_{J}}$;}
 \STATE{$R_{k}=$ number of nonzero elements in ${\bf \Sigma_{J}}$}
 \ENDFOR
\ENSURE
       {$\{ {\bf U}_{k}, {\bf \Sigma}_{k},{\bf V}_{k}\}_{k=1}^{N}, R_k$.}
\end{algorithmic}
\end{algorithm}

\renewcommand{\algorithmicensure}{\textbf{Parameters:}}
\begin{algorithm}[htb]
\caption{FW-based algorithm for latent Tensor-Ring nuclear norm minimization.}
\label{alg:FW}
\begin{algorithmic}[1]
\REQUIRE
       { Partically observed entries ${\mathcal T}_{\Omega}$, 
\ENSURE
       $\bar{R}, tol = 10^{-5}$ .}

 \STATE{ Initialize: $\mathcal{X}^{(0)}=0$, $R_1=R_2=\cdots=R_{k}=0$, $\{ {\bf U}_{k}, {\bf \Sigma}_{k},{\bf V}_{k}\}_{k=1}^{N}=[]$.}
 \FOR{$t=1 $ \TO$t_{max}$}
 \STATE{$k^{*}=\text{arg}\max_{k\in\mathcal{D}}\sigma_{\text{max}}(-\nabla F(\mathcal{X})_{<k,d>})$}
 \STATE{$({\bf u}_{k^{*}},{\bf v}_{k^{*}})=$ a pair of left and right singular vectors corresponding to the largest singular value of $-\nabla F(\mathcal{X})_{<k,d>}$;}
 \STATE{Update $\mathcal{S}_{\Omega}^{(t+1)}$ by Equation (\ref{FW-SOmega-update});}
 \STATE{Update $\gamma^{(t+1)}$ by Equation (\ref{FW-gamma-update}); }
 \STATE{Update $\mathcal{X}_{\Omega}^{(t+1)}$ by Equation (\ref{FW-XOmega-update});}
 \STATE{Update $\{ {\bf U}_{k}, {\bf \Sigma}_{k},{\bf V}_{k}\}_{k=1}^{N}$ by Equation (\ref{FW-basis-update})}
 \STATE{$R_{k^{*}}=R_{k^{*}}+1$;}
\IF{$\sum_{N}^{k=1}R_{k}>\bar{R}$}
\STATE{Reducing the size of $\{ {\bf U}_{k}, {\bf \Sigma}_{k},{\bf V}_{k}\}_{k=1}^{N}$ by \bf{Algorithm \ref{alg:ReducingSize}};} 
\STATE{$\mathcal{X}_{\Omega}^{(t+1)}=\left(\sum_{k=1}^{N}\text{fold}_{k}({\bf U}_{k}{\bf \Sigma}_{k}{\bf V}_{k}^{\top})\right)_{\Omega}$;}
\ENDIF
 \IF{$\|\mathcal{X}_{\Omega}^{(t+1)}-\mathcal{X}_{\Omega}^{(t)}\|_{F}/\| \mathcal{X}_{\Omega}^{(t)}\|_{F}\le tol$}
 \STATE{break}
 \ENDIF
 \ENDFOR
 \STATE{Return $\mathcal{X}=\sum_{k=1}^{N}\text{fold}_{k}({\bf U}_{k}{\bf \Sigma}_{k}{\bf V}_{k}^{\top})$}
\end{algorithmic}
\end{algorithm}
\subsection{Analysis of Space-Complexity and Time-Complexity}
It is well-known that the complexities of space and time are very important to evaluate one algorithm. In this section, for an $N$th-order tensor $\mathcal{X}$ with size $I\times I\times \cdots\times I$, we aim to analyze the proposed method in terms of space complexity and time complexity. Seen from the {\bf Algorithm \ref{alg:FW}}, all the operations are based on the sparse tensors of $\|\Omega\|_{1}$ observed entries  and a set of basis matrices $\{{\bf U}_{k}\in\mathbb{R}^{I^{d}\times R_{k}}, {\bf \Sigma}_{k}\in\mathbb{R}^{R_{k}\times R_{k}}, {\bf V}_{k}\in\mathbb{R}^{I^{N-d}\times R_{k}}\}_{k=1}^{N}$. Thus, the space complexity of the proposed method is $\mathcal{O}((I^{d}+I^{N-d}+1)R+\|\Omega\|_{1})$ per iteration, where $R=\sum_{k=1}^{N}R_{k}$, $d = \left\lfloor \frac{N}{2}\right\rfloor$. For the time-complexity of the proposed method, the main per-iteration cost lies in the update of $\mathcal{S}_{\Omega}^{(t+1)}$ which consist of the rank-one SVDs of  $-\nabla F(\mathcal{X})_{<k,d>}\in\mathbb{R}^{I^{d}\times I^{N-d}}$ for $k=1,\cdots, N$ and the computation of Equation (\ref{FW-SOmega-update}). The rank-one SVDs performed by the power method require a cost of $\mathcal{O}(N(I^{N}+I^{d}+I^{N-d}))$, and the time-cost of Equation (\ref{FW-SOmega-update}) is $\mathcal{O}(\|\Omega\|_{1})$. Therefore, the overall time-complexity of the proposed method is $\mathcal{O}(N(I^{N}+I^{d}+I^{N-d}))$ per iteration.

TABLE \ref{tab:complexity} summarizes the space-complexity and time-complexity of  other tensor-norm based algorithms: (i) Overlapped nuclear norm HaLTRC \cite{liu2013tensor}; (ii) Overlapped nuclear norm via tensor-train SiLRTC-TT \cite{bengua2017efficient}; (iii)  Tensor nuclear norm TNN \cite{zhang2014novel}; (iv) Overlapped Tensor-ring nuclear norm TRNNM \cite{Yu2019tensor}; (v)  Scaled latent nuclear norm FFWTensor \cite{guo2017efficient}. Since HaLTRC and TRNNM impose $N$ auxiliary variables and $N$ Lagrangian multipliers to simplify the optimization, they both require a space-complexity of $\mathcal{O}((2N+1)I^{N}) $ per iteration. TNN requires two additional variables, and SiLRTC-TT has $N$ auxiliary variables. Thus their per-iteration space-complexities are $\mathcal{O}(3I^{N}) $ and $\mathcal{O}((N+1)I^{N}) $, respectively. Similar to the proposed algorithm, FFWTensor only needs to store the sparse tensors and a set of basis matrices at a cost of $\mathcal{O}((I^{N-1}+I+1)R+\|\Omega\|_{1})$ per iteration. And the per-iteration time-complexity of these algorithms can be obtained according to the corresponding papers.

Seen from TABLE \ref{tab:complexity}, it is not difficult to observe that LTRNNM requires a much smaller space-complexity over the other compared algorithms when the target tensor $\mathcal{X}$ has a high missing ratio and $R<<I^d$. This is because the sparsity structure of $\mathcal{X}$ is efficiently used in LTRNNM. When $N=3$, LTRNNFW reduces to the unscaled version of  FFWTensor, thus they have the same space-complexity. It is worthy noting that LTRNNM requires much lesser storage space over FFWTensor when $N > 3$, due to $(I^{d}+I^{N-d})$ is significantly smaller than $(I^{N-1}+I)$.

Note that, both LTRNNFW and FFWTensor have the smaller order of magnitude of time-complexity than the other compared algorithms, which is benefit from the sparsity structure of the target tensor and the efficient rank-one SVD used during iterations. In contrast, other algorithms have to operate on the full-sized tensors and perform partial-SVD in each iteration. Typically, performing rank-one SVD is much significantly faster than partial-SVD, especially for the large scale matrix. Therefore, it is not surprising that LTRNNFW and FFWTensor are time-efficient.

\begin{table}[htbp]
  \centering
  \caption{Space-Complexity and Time-Complexity of algorithms for one iteration.}
 \renewcommand{\arraystretch}{1.5}
  \scalebox{0.85}{
    \begin{tabular}{lll}
    \multicolumn{1}{l}{Algorithms} & \multicolumn{1}{l}{Space-Complexity} & \multicolumn{1}{l}{Time-Complexity} \\
\midrule
    HaLRTC &     $\mathcal{O}((2N+1)I^{N}) $     & $\mathcal{O}(NI^{N+1})$   \\
    SiLRTC-TT &       $\mathcal{O}((N+1)I^{N}) $     & $\mathcal{O}(I^{N+d}+I^{N+d-1})$ \\
    TNN   &          $\mathcal{O}(3I^{N}) $     & $\mathcal{O}(I^{N}log(I^{N-2})+I^{N+1})$  \\
    TRNNM &      $\mathcal{O}((2N+1)I^{N}) $    & $\mathcal{O}(NI^{N+d})$  \\
    FFWTensor &  $\mathcal{O}((I^{N-1}+I+1)R+\|\Omega\|_{1})$     &$\mathcal{O}(N(I^{N}+I^{N-1}+I))$  \\
    {\bf LTRNNFW} &   $\mathcal{O}((I^{d}+I^{N-d}+1)R+\|\Omega\|_{1})$    & $\mathcal{O}(N(I^{N}+I^{d}+I^{N-d}))$ \\
\midrule
    \end{tabular}}%
  \label{tab:complexity}%
\end{table}%

\section{experiments}\label{experiments}
\subsection{Effect of $\beta$ for the Proposed Method}
\begin{figure}[htbp]
\centering
\includegraphics[ width=0.45\textwidth]
{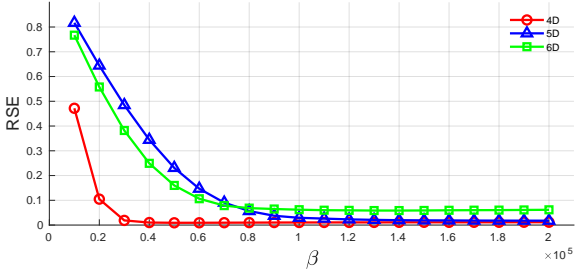}
\caption{Plots of relative square error versus $\beta$.}
\label{plot:beta}
\end{figure}
This section aims to investigate the effect of the constraint parameter $\beta$ for the proposed method on the synthetic data $\mathcal{X}\in\mathbb{R}^{I_{1}\times I_{2}\times\cdots\times I_{N}}$ with the latent structure of $\mathcal{X}=\sum_{k=1}^{N}\mathcal{X}_{k}$. All the $\{\mathcal{X}_{k}\}_{k=1}^{N}$ are generated such that $(\mathcal{X}_{k})_{<k,d>}\in\mathbb{R}^{m_k\times n_k}$ has a low-rank structure, i.e. $(\mathcal{X}_{k})_{<k,d>}={\bf A}{\bf B}^{\top}$, where the values of ${\bf A}\in\mathbb{R}^{m_{k}\times r_{k}}$ and ${\bf B}\in\mathbb{R}^{n_{k}\times r_{k}}$ are drawn randomly from the standard Gaussian distribution $\mathcal{N}(0,1)$. For simplicity, we set the dimension of each mode same and so does the corresponding low-ranks, i.e., $I_{1}= I_{2}=\cdots= I_{N}=I$, $R_1=R_2=\cdots=R_N$. The uniformly random missing ratio of 50\% is considered in this experiment, and the relative squared error (RSE) is used as the evaluation index. The RSE between the estimation $\bar{\mathcal{X}}$ and the true one $\mathcal{X}$ is defined by $\text{RSE}=\| \mathcal{X}-\bar{\mathcal{X}}\|_{F}/\|\mathcal{X}\|_{F}$.

Fig. \ref{plot:beta} shows the plots of RSE versus $\beta$ for tensors of different size $30\times30\times30\times30$ (4D), $20\times20\times20\times20\times20$ (5D), $10\times10\times10\times10\times10\times10$ (6D) and corresponding rank tuples $(5, 5, 5, 5)$ (4D),  $(6, 6, 6, 6, 6)$ (5D), $(7, 7, 7, 7, 7, 7)$ (6D). The plots illustrate that the proposed method is robust to constraint parameter $\beta$ in a wide range, which is an important property for algorithms in practical applications.

\subsection{Performance in High-Order Form}
\begin{table*}[htbp]
  \centering
  \caption{Performance (RSE, PSNR, SSIM, SSDI and RunTime) of FFWTensor and LTRNNFW under different-order form \{3D, 6D, 9D, 12D\} and missing ratios \{ 70\%, 75\%, 80\%, 85\%, 90\%, 95\%\}.}
  \renewcommand{\arraystretch}{1.2}
  \scalebox{1}{
    \begin{tabular}{c|cccccc|ccccc}
     \hline
          & \multicolumn{6}{c|}{FFWTensor}                & \multicolumn{5}{c}{\bf LTRNNFW} \\
      \hline
    $mr$    &       & RSE   & PSNR  & SSIM  & SSDI (1e6) & RunTime (s) & RSE   & PSNR  & SSIM  & SSDI (1e6) & RunTime (s) \\
      \hline
    \multirow{4}[2]{*}{70\%} & 3D    & 0.0281 & 44.84 & 0.9894 & 7.79  & 53.71 & 0.028 & 44.88 & 0.9896 & 7.77  & 58.3 \\
          & 6D    & 0.2442 & 26.08 & 0.7828 & 37.79 & 168.05 & \textbf{0.0144} & \textbf{50.64} & \textbf{0.9953} & \textbf{3.43} & \textbf{120.07} \\
          & 9D    & 0.4914 & 20.01 & 0.6455 & 67.06 & 304.91 & \textbf{0.0076} & \textbf{56.18} & \textbf{0.9984} & \textbf{4.72} & \textbf{260.39} \\
          & 12D   & 0.4249 & 21.27 & 0.7436 & 94.71 & 525.01 & \textbf{0.0207} & \textbf{47.52} & \textbf{0.9883} & \textbf{3.75} & \textbf{420.37} \\
     \hline
    \multirow{4}[2]{*}{75\%} & 3D    & 0.0396 & 41.88 & 0.9811 & 6.78  & 48.3  & 0.0392 & 41.98 & 0.9821 & 4.99  & 49.96 \\
          & 6D    & 0.3081 & 24.06 & 0.7565 & 29.32 & 142.25 & \textbf{0.0187} & \textbf{48.38} & \textbf{0.9934} & \textbf{1.3} & \textbf{89.96} \\
          & 9D    & 0.5646 & 18.8  & 0.6264 & 63.99 & 275.87 & \textbf{0.0112} & \textbf{52.87} & \textbf{0.9973} & \textbf{2.49} & \textbf{143.32} \\
          & 12D   & 0.4991 & 19.87 & 0.7263 & 97.86 & 578.55 & \textbf{0.0256} & \textbf{45.66} & \textbf{0.9838} & \textbf{1.54} & \textbf{396.45} \\
    \hline
    \multirow{4}[2]{*}{80\%} & 3D    & 0.0564 & 38.81 & 0.9675 & 7.26  & 35.01 & 0.0563 & 38.82 & 0.9682 & 7.75  & 36.23 \\
          & 6D    & 0.3788 & 22.26 & 0.7145 & 33.78 & 128.21 & \textbf{0.0223} & \textbf{46.85} & \textbf{0.9912} & \textbf{2.74} & \textbf{73.63} \\
          & 9D    & 0.6407 & 17.7  & 0.6069 & 55.36 & 265.15 & \textbf{0.016} & \textbf{49.72} & \textbf{0.9954} & \textbf{3.85} & \textbf{99.19} \\
          & 12D   & 0.6028 & 18.23 & 0.6998 & 108.82 & 473.6 & \textbf{0.0327} & \textbf{43.54} & \textbf{0.9783} & \textbf{2.85} & \textbf{196.13} \\
      \hline
    \multirow{4}[2]{*}{85\%} & 3D    & 0.0858 & 35.17 & 0.9387 & 7.74  & 27.33 & 0.084 & 35.35 & 0.942 & 6.37  & 27.42 \\
          & 6D    & 0.4974 & 19.9  & 0.6745 & 29.63 & 103.99 & \textbf{0.0309} & \textbf{44.03} & \textbf{0.9855} & \textbf{1.34} & \textbf{69.89} \\
          & 9D    & 0.7192 & 16.7  & 0.5957 & 51.71 & 225.51 & \textbf{0.0249} & \textbf{45.91} & \textbf{0.991} & \textbf{2.49} & \textbf{68.78} \\
          & 12D   & 0.7203 & 16.68 & 0.6665 & 112.21 & 435.47 & \textbf{0.0453} & \textbf{40.72} & \textbf{0.9663} & \textbf{1.45} & \textbf{189.95} \\
      \hline
    \multirow{4}[2]{*}{90\%} & 3D    & 0.1398 & 30.92 & 0.8853 & 5.53  & 20.14 & 0.1403 & 30.89 & 0.8862 & 5.51  & 20.61 \\
          & 6D    & 0.6372 & 17.75 & 0.6359 & 34.75 & 95.95 & \textbf{0.0494} & \textbf{39.95} & \textbf{0.9703} & \textbf{2.04} & \textbf{65.39} \\
          & 9D    & 0.7988 & 15.78 & 0.5994 & 49.68 & 212.35 & \textbf{0.0432} & \textbf{41.12} & \textbf{0.9792} & \textbf{3.06} & \textbf{41.92} \\
          & 12D   & 0.852 & 15.22 & 0.6259 & 120.37 & 357.26 & \textbf{0.0712} & \textbf{36.79} & \textbf{0.9409} & \textbf{2.13} & \textbf{101.78} \\
      \hline
    \multirow{4}[1]{*}{95\%} & 3D    & 0.2727 & 25.12 & 0.7671 & 6.74  & 12.99 & 0.2748 & 25.05 & 0.7649 & 5.39  & 13.17 \\
          & 6D    & 0.8214 & 15.54 & 0.6095 & 29.94 & 73.61 & \textbf{0.1017} & \textbf{33.69} & \textbf{0.9254} & \textbf{1.34} & \textbf{17.61} \\
          & 9D    & 0.8994 & 14.75 & 0.6235 & 49.77 & 158.51 & \textbf{0.0987} & \textbf{33.95} & \textbf{0.9328} & \textbf{2.23} & \textbf{20.03} \\
          & 12D   & 0.9566 & 14.22 & 0.6073 & 125.97 & 309.57 & \textbf{0.1387} & \textbf{30.99} & \textbf{0.8732} & \textbf{1.44} & \textbf{36.3} \\
          \hline
     \end{tabular}}
  \label{table:MRI}
\end{table*}

To the best of our knowledge, reshaping low-order tensors into high-order tensors is a common practice to improve the performance for TT/TR-based methods on visual-data completion \cite{wang2017efficient, bengua2017efficient, yuan2018high, Yu2018effiective, Yu2019tensor}. To evaluate the proposed method in high-order form, the first 180 frames of the brain Magnetic Resonance Imaging (MRI) \cite{liu2013tensor} with cropped size $180\times 216$ is considered in this experiment. Thus, we present the MRI data by the 3rd-order tensor of size $180\times216\times180$ and further reshape into tensors of size $12\times15\times12\times18\times12\times15$ (6D), $4\times5\times9\times4\times6\times9\times4\times5\times9$ (9D) and  $4\times5\times3\times3\times4\times6\times3\times3\times4\times5\times3\times3$ (12D). RSE, peak signal-to-noise ratio (PSNR), structural similarity (SSIM) \cite{wang2004image}, storage size during iteration (SSDI) and RunTime are used to evaluate the performance. The PSNR between the estimation $\bar{\mathcal{X}}$ and the true one $\mathcal{X}$ is defined by $\text{PSNR}= 10\log_{10}(255^2/\text{MSE})$, where $\text{MSE} = \|\mathcal{X}- \bar{\mathcal{X}}\|_{F}^{2}/\text{num}(\mathcal{X})$ and $\text{num}(\mathcal{X})$ denotes the number of entries of $\mathcal{X}$. We choose FFWTensor method to be the baseline, due to it and the proposed method both took full advantage of the sparsity structure of the observed tensor during iterations. For simplicity, the SSDI of both FFWTensor and the proposed method is defined by a sum of the total number of entries of basis matrices $\{{\bf U}_{k}\in\mathbb{R}^{p_k\times r_k}, {\bf \Sigma}_{k}\in\mathbb{R}^{r_k\times r_k}, {\bf V}_{k}\in\mathbb{R}^{r_k\times q_k}\}_{k=1}^{N}$ and the number of observed entries, i.e., $\text{SSDI}= \sum_{k=1}^{N}(p_k r_k+r_k+q_k r_k)+\|\Omega\|_{1}$. 

TABLE \ref{table:MRI} shows the performance of FFWTensor and LTRNNFW under different-order form \{3D, 6D, 9D, 12D\} and missing ratios \{70\%, 75\%, 80\%, 85\%, 90\%, 95\%\}. Obviously, the proposed method obtains significantly better results in the high-order form \{6D, 9D\}, while slightly degrades the performance after further reshaping into 12D form. This implies that reshaping low-order tensor to high-order tensor does help to improve the performance, especially when reshaping into an appropriate high-order form. However, FFWTensor achieves the worse performance after reshaping into high-order form. In addition, it can be observe that:
\begin{itemize} 
\item In 3D case, the proposed method obtains similar results as FFWTensor, which is caused by that the proposed method reduces to the unscaled version of FFWTensor when encountering 3rd-order tensors.
\item In high-order cases, i.e. \{6D, 9D, 12D\}, the proposed method significantly obtains better results over FFWTensor in terms of RSE, PSNR, SSIM, SSDI and RunTime. Note that, the main difference of the proposed method from FFWTensor is a more balanced unfolding scheme applied in the proposed method. Better results of \{RSE, PSNR, SSIM\} illustrate the powerful ability of the balanced unfolding scheme in catching the global information. Smaller values of SSDI and RunTime imply stronger power of data- representation and more space-and-time efficiency, which is meaningful when encountering large-scale data or the memory is limited.
\end{itemize}
Moreover, as shown in Fig. \ref{fig:MRI}, the recovery frame by the proposed method is more clear than that by FFWTensor. All these results show the superiority of the proposed method in processing the high-order tensors.

\begin{figure*}
\begin{center}
\begin{minipage}[b]{0.17\textwidth}\centering
\includegraphics[width=1\linewidth]{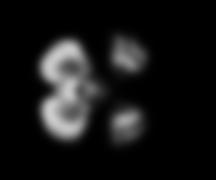}
\end{minipage}\vspace{0.1cm}
\begin{minipage}[b]{0.17\textwidth}\centering
\includegraphics[width=1\linewidth]{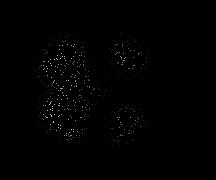}
\end{minipage}
\begin{minipage}[b]{0.17\textwidth}\centering
\includegraphics[width=1\linewidth]{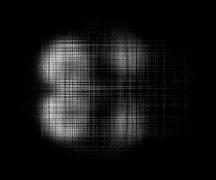}
\end{minipage}
\begin{minipage}[b]{0.17\textwidth}\centering
\includegraphics[width=1\linewidth]{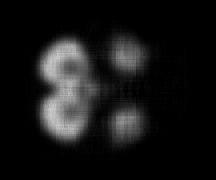}
\end{minipage}\\
\begin{minipage}[b]{0.17\textwidth}\centering
\includegraphics[width=1\linewidth]{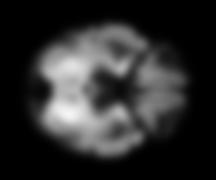}
\end{minipage}\vspace{0.1cm}
\begin{minipage}[b]{0.17\textwidth}\centering
\includegraphics[width=1\linewidth]{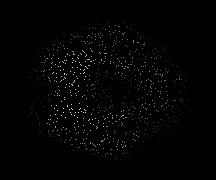}
\end{minipage}
\begin{minipage}[b]{0.17\textwidth}\centering
\includegraphics[width=1\linewidth]{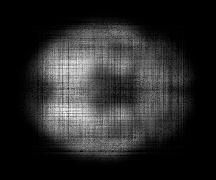}
\end{minipage}
\begin{minipage}[b]{0.17\textwidth}\centering
\includegraphics[width=1\linewidth]{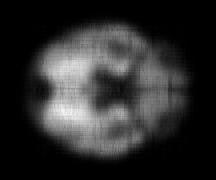}
\end{minipage}\\
\begin{minipage}[b]{0.17\textwidth}\centering
\includegraphics[width=1\linewidth]{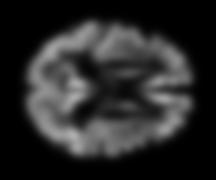}
\end{minipage}\vspace{0.1cm}
\begin{minipage}[b]{0.17\textwidth}\centering
\includegraphics[width=1\linewidth]{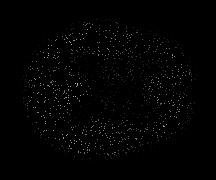}
\end{minipage}
\begin{minipage}[b]{0.17\textwidth}\centering
\includegraphics[width=1\linewidth]{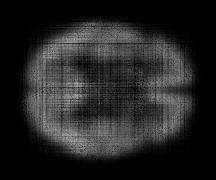}
\end{minipage}
\begin{minipage}[b]{0.17\textwidth}\centering
\includegraphics[width=1\linewidth]{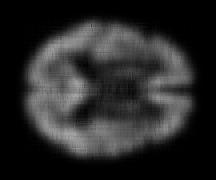}
\end{minipage}\\
\begin{minipage}[b]{0.17\textwidth}\centering
\includegraphics[width=1\linewidth]{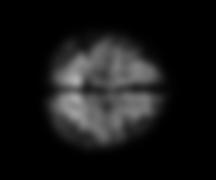}\caption*{Original}
\end{minipage}\vspace{0.1cm}
\begin{minipage}[b]{0.17\textwidth}\centering
\includegraphics[width=1\linewidth]{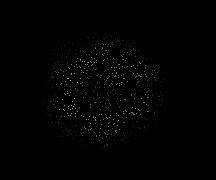}\caption*{Observation}
\end{minipage}
\begin{minipage}[b]{0.17\textwidth}\centering
\includegraphics[width=1\linewidth]{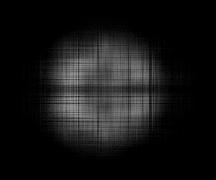}\caption*{FFWTensor}
\end{minipage}
\begin{minipage}[b]{0.17\textwidth}\centering
\includegraphics[width=1\linewidth]{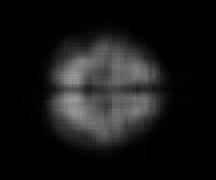}\caption*{\bf LTRNNFW}
\end{minipage}
\caption{The visual results of FFWTensor and LTRNNFW on the MRI images with the uniformly missing ratio of 95\%. The recovery results are shown by randomly picking slices.}
\label{fig:MRI}
\end{center}
\end{figure*}

\subsection{Visual Data Inpainting}
In this section, we compare the proposed method to other state-of-the-art norm-based methods, including HaLRTC \cite{liu2013tensor}, SiLRTC-TT \cite{bengua2017efficient}, TNN \cite{zhang2014novel}, TRNNM \cite{Yu2019tensor} and FFWTensor \cite{guo2017efficient}. To evaluate these methods, extensive experiments are conducted on three visual-data sets: (i) A hyperspectral image (HSI)\footnote{Available at \url{http://www.ehu.eus/ccwintco/index.php/Hyperspectral_Remote_Sensing_Scenes}} of size $200\times 200\times 80$, which records the area of urban landscape; (ii) The Train-video\footnote{Available at \url{https://www.youtube.com/watch?v=6mcDsY0TwcA}} which consists of 80 color frames of size $72\times128\times3$, presented by a tensor of size $72\times128\times3\times80$; (iii) The AT\&T ORL\footnote{Available at \url{http://www.uk.research.att.com/facedatabase.html.}} face data set which consists of 10 different images of size $32\times32$ for each of 40 distinct subjects, presented by a tensor of size $32\times32\times10\times40$. Since reshaping the visual data into high-order tensor significantly improve the performance of the TT/TR-based methods (i.e. proposed method, TRNNM and SiLRTC-TT), which is illustrated in our experiments and previous works \cite{ bengua2017efficient, Yu2019tensor}, we reshape these three visual-data sets into high-order tensors for the TT/TR-based methods. Specifically, the HSI, Train-video and AT\&T ORL face data are reshaped into high-order tensors of size $ 10\times 20\times 10\times 20\times 8\times 10 $ (6D), $ 8\times 9\times 8\times 16\times 3\times 8\times 10 $ (7D) and $ 4\times 8\times 4\times 8\times 10\times 4\times 10 $ (7D), respectively. In our experiments, the parameters of the compared methods are set according to the corresponding paper such to achieve the best results.

As shown in Fig. \ref{plot:HSI}, \ref{plot:Train}, \ref{plot:ORL}, RSE, PSNR, and Runtime are used to evaluate the performance of each method on these three visual-data sets under uniformly random missing ratios \{80\%, 85\%, 90\%, 95\%\}. Observe that, in our considering cases, the proposed method outperforms the other methods at a small time-cost. Better results of RSE and PSNR are benefited from the powerful ability of a more balanced unfolding scheme in catching the global information. Smaller time-cost is caused by the efficiently-utilization of sparsity structure and rank-one SVD operation during iteration. Though FFWTensor method spends comparable time-cost with the proposed method, it fails to achieve good performance as the proposed method in most cases, especially in high missing-ratio cases \{90\%, 95\%\}. The other methods (i.e. HaLRTC, SiLRTC-TT, TNN, TRNNM) can achieve comparable results with the proposed method in some cases, however, require largely time-cost. Moreover, for HaLRTC, SiLRTC-TT, and TRNNM, the computational expensive determination of several weighting-parameters significantly increase their time-cost. These imply that, compared to the proposed method, other norm-based completion methods are not good choices for the large-scale data in practical applications. In addition, the visual results of each method on these three data sets are shown in Fig. \ref{fig:HSI}, \ref{fig:Train}, \ref{fig:ORL}. Observe that the proposed method obtains the recovery images with a better resolution and captures much more detailed information, e.g. wheel, beard, and eyes.
 
\begin{figure*}
\centering
\subfloat[]{\includegraphics[width=0.3\textwidth]
{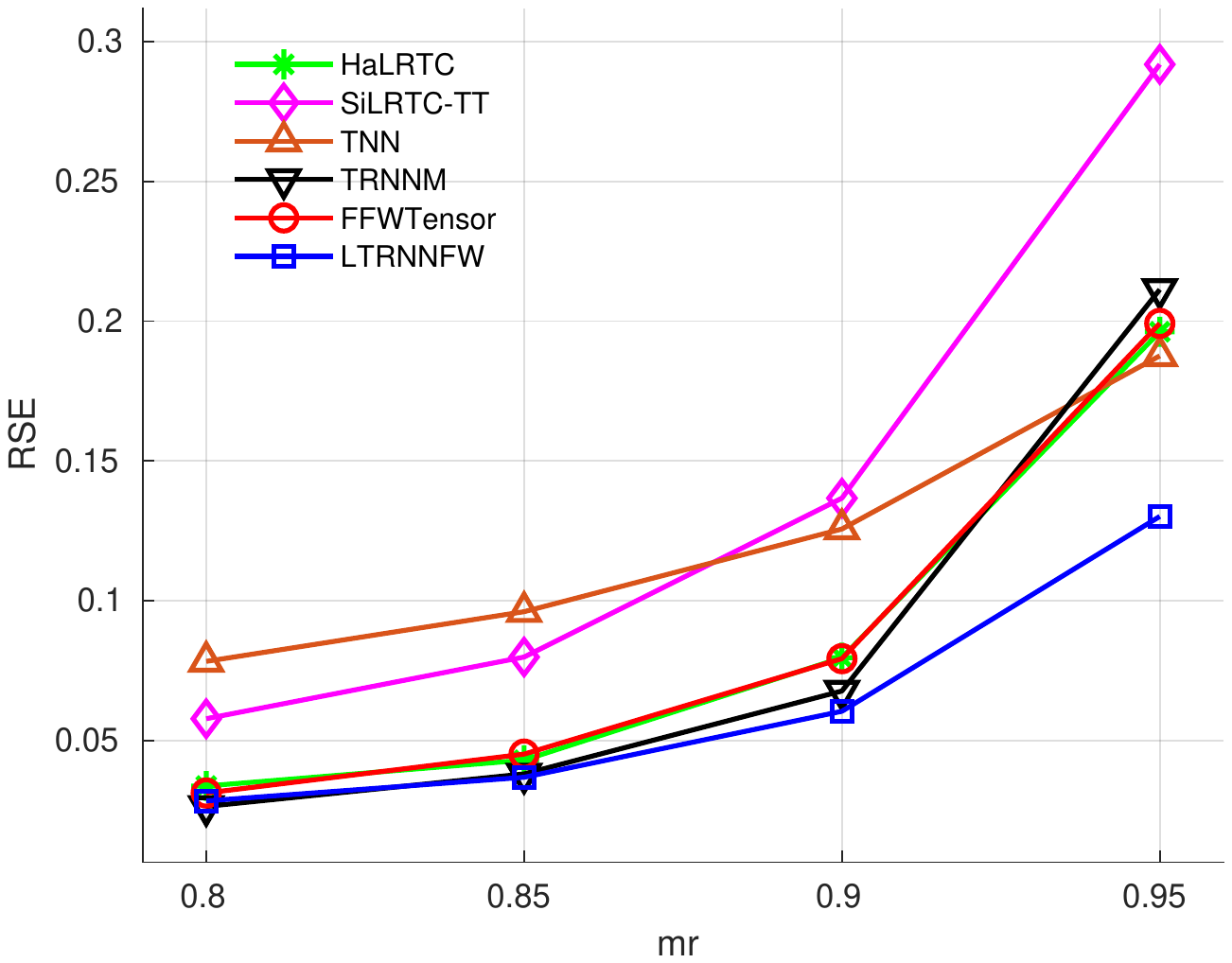}}
\subfloat[]{\includegraphics[width=0.3\textwidth]
{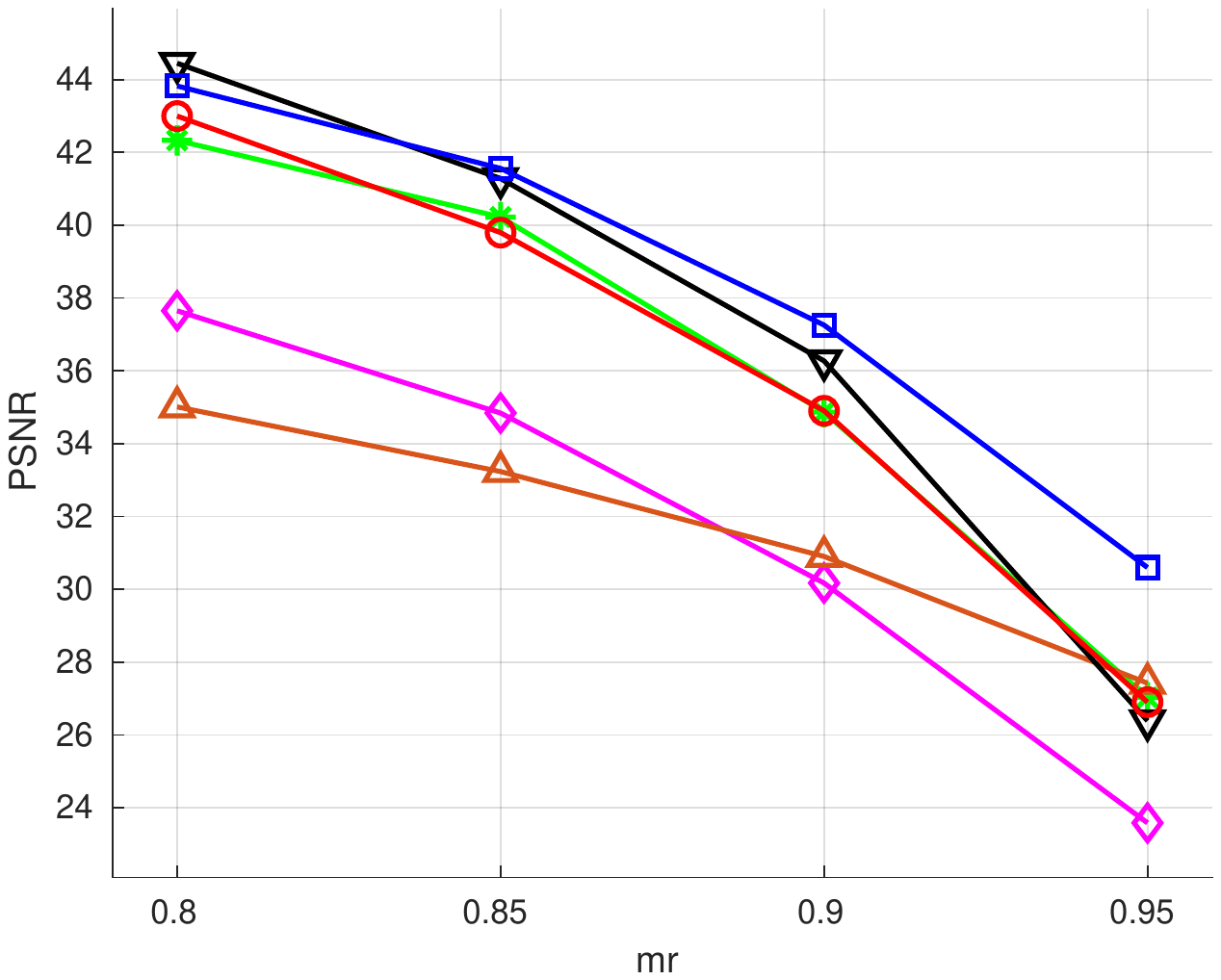}}
\subfloat[]{\includegraphics[width=0.3\textwidth]
{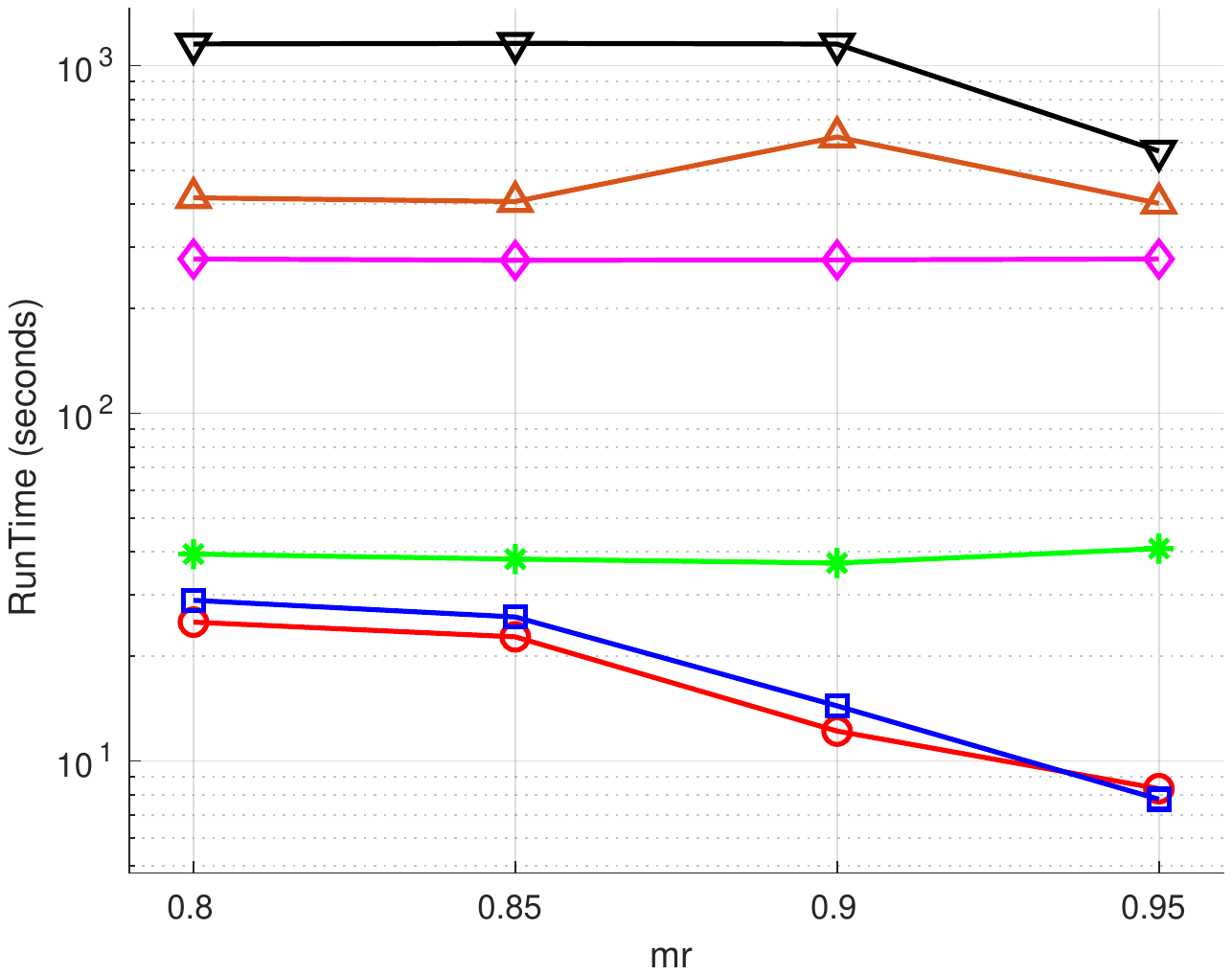}}
\caption{Comparison of  RSE, PSNR and runtime (seconds) on HSI images under varying missing ratios.}
\label{plot:HSI}
\end{figure*}

\begin{figure*}
\centering
\subfloat[]{\includegraphics[width=0.3\textwidth]
{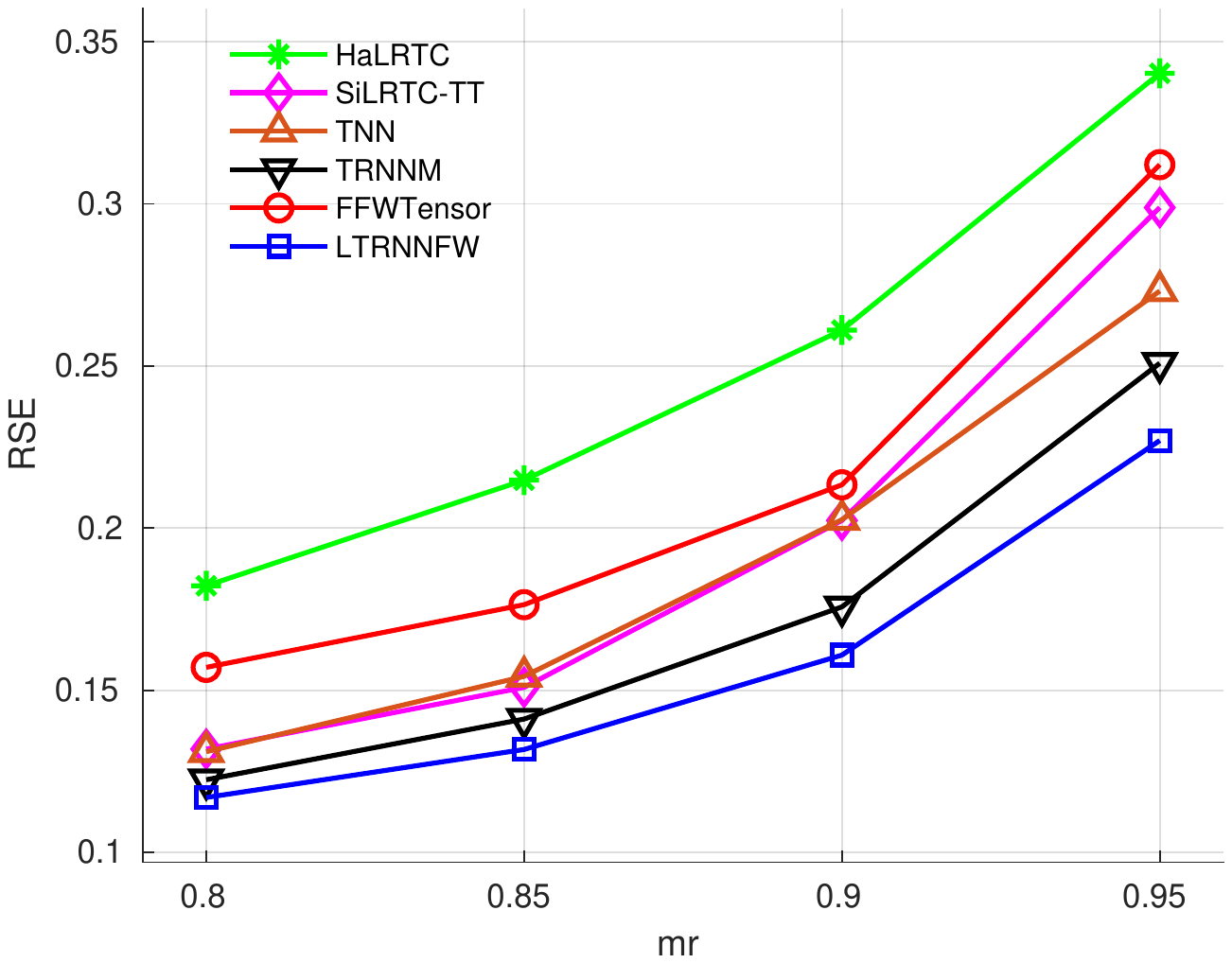}}
\subfloat[]{\includegraphics[width=0.3\textwidth]
{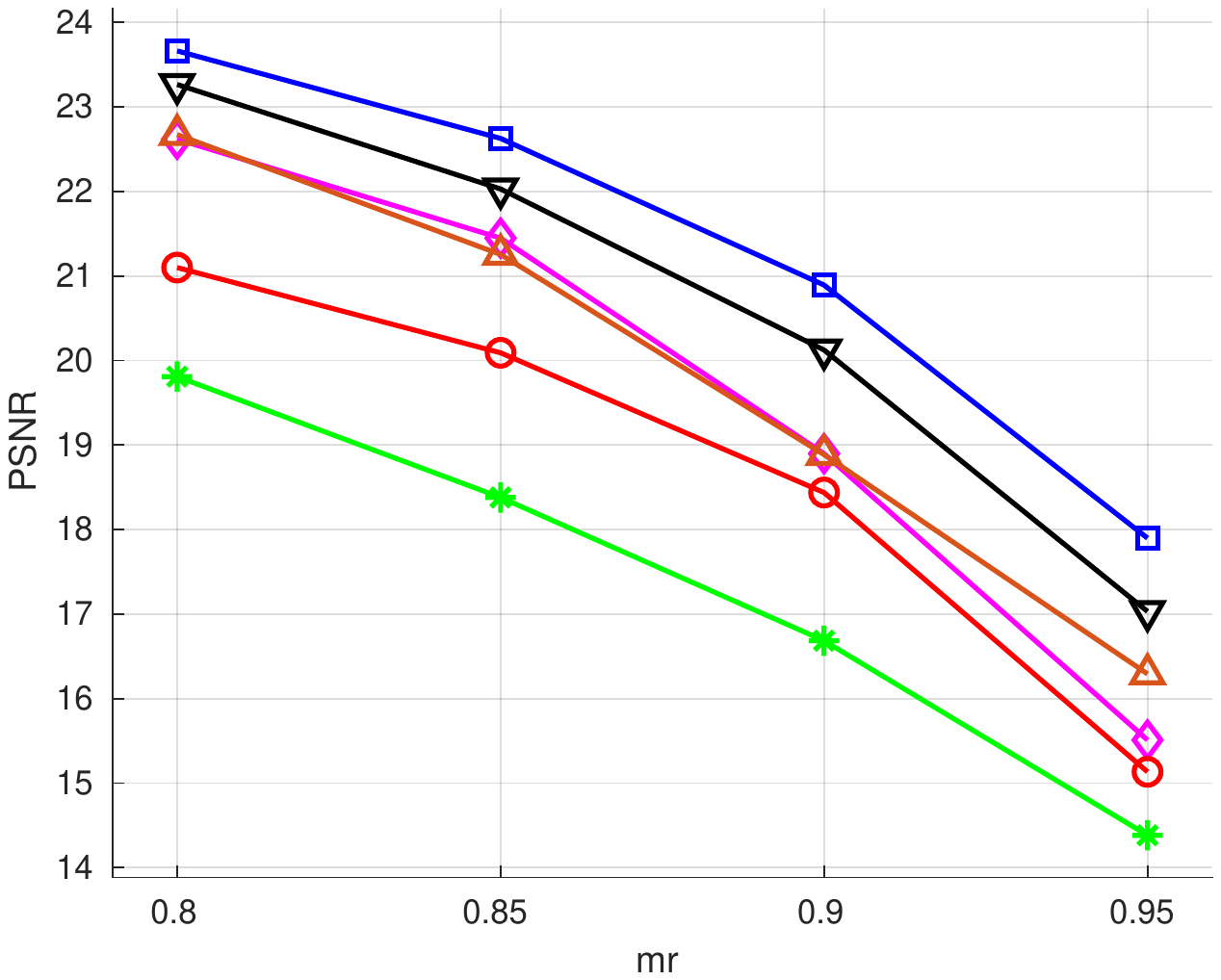}}
\subfloat[]{\includegraphics[width=0.3\textwidth]
{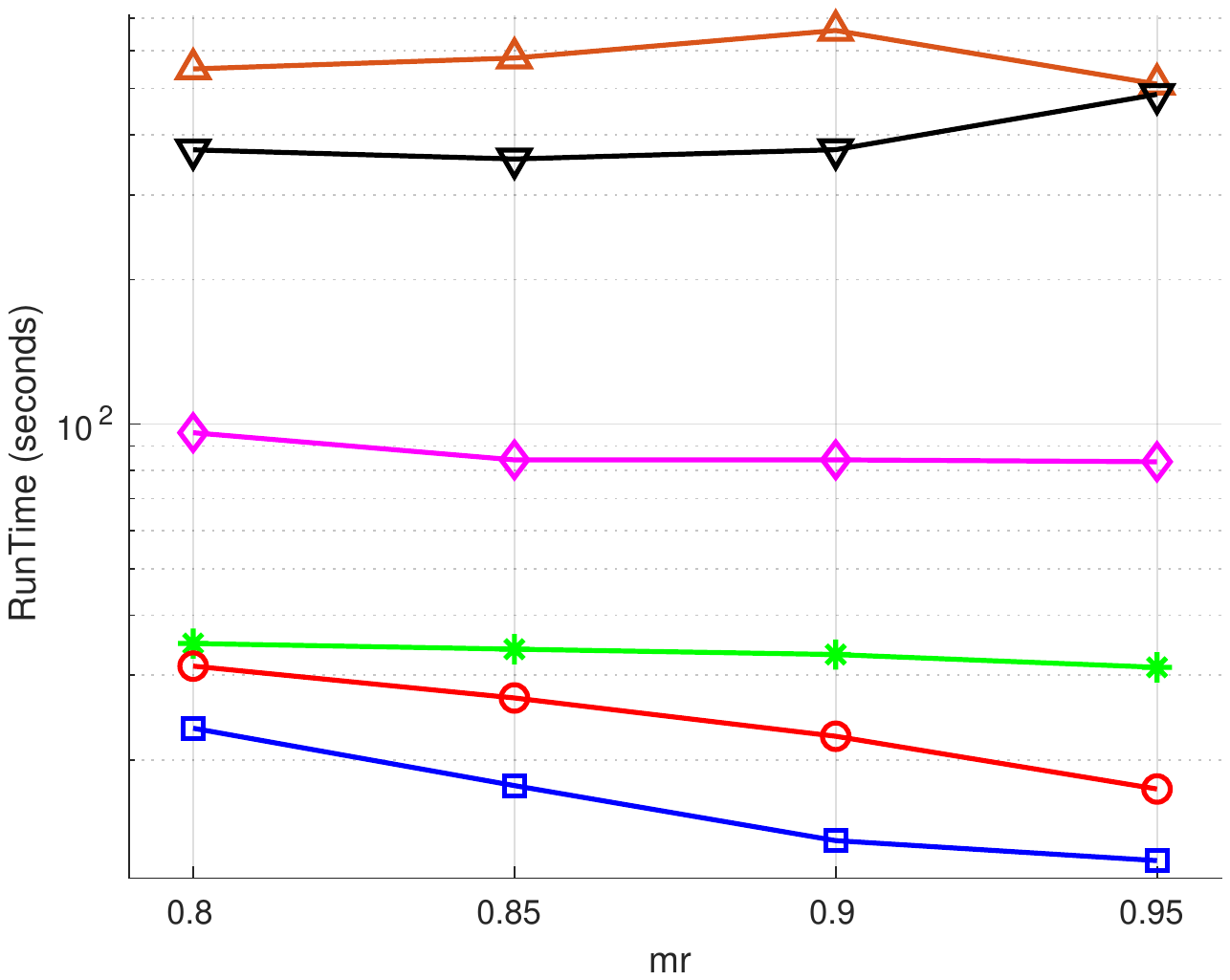}}
\caption{Comparison of RSE, PSNR and runtime (seconds) on the Train video under varying missing ratios.}
\label{plot:Train}
\end{figure*}

\begin{figure*}
\centering
\subfloat[]{\includegraphics[width=0.3\textwidth]
{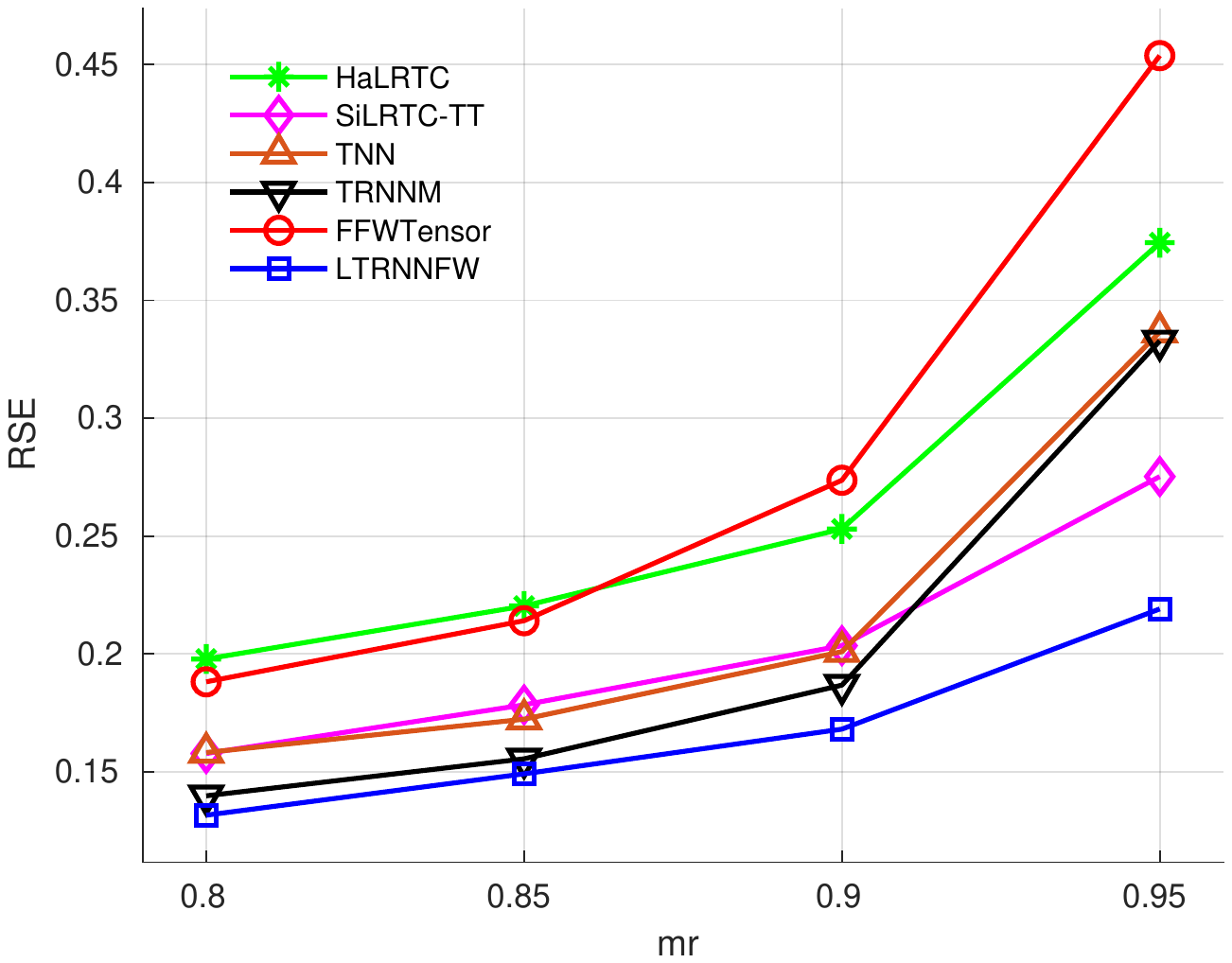}}
\subfloat[]{\includegraphics[width=0.3\textwidth]
{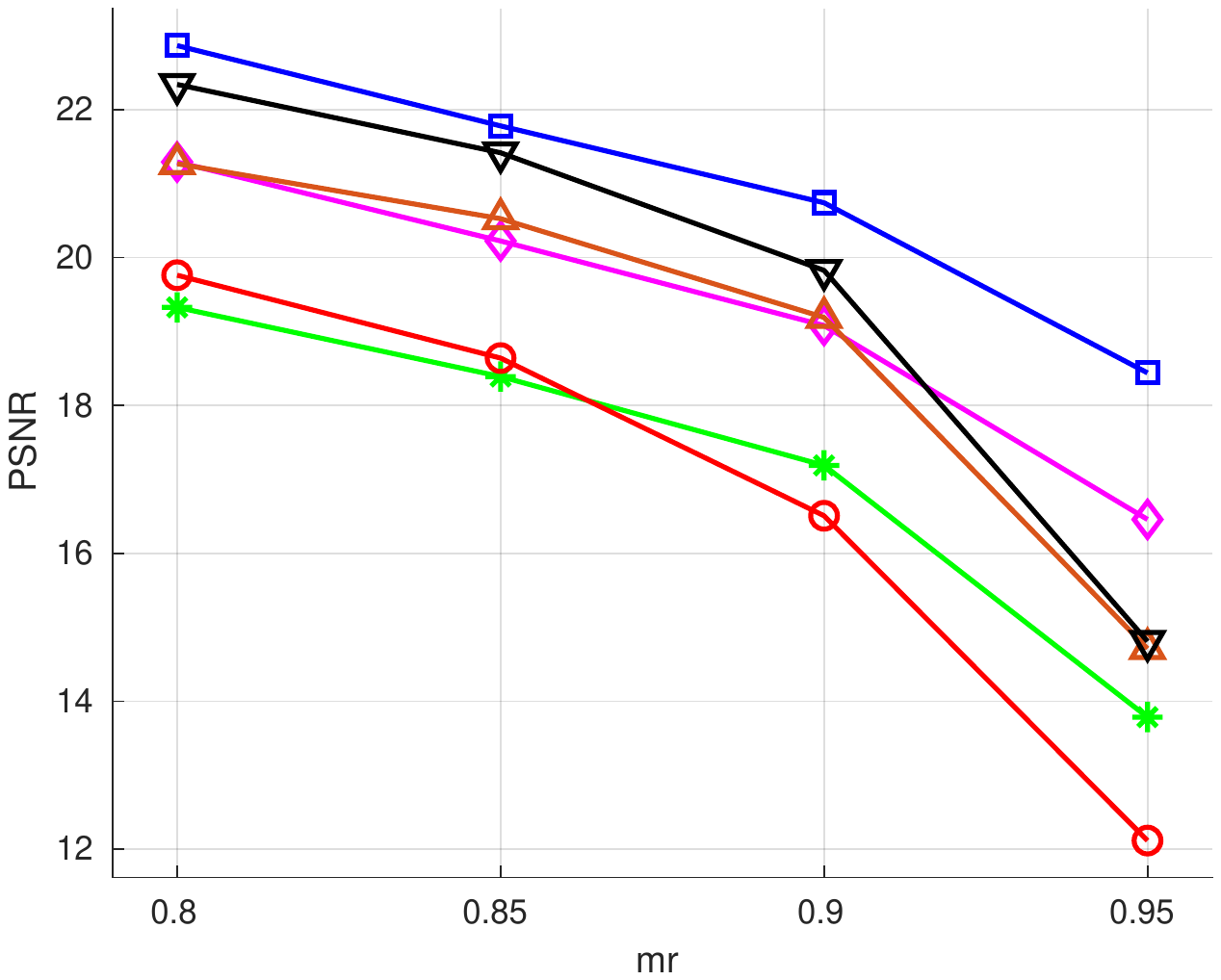}}
\subfloat[]{\includegraphics[width=0.3\textwidth]
{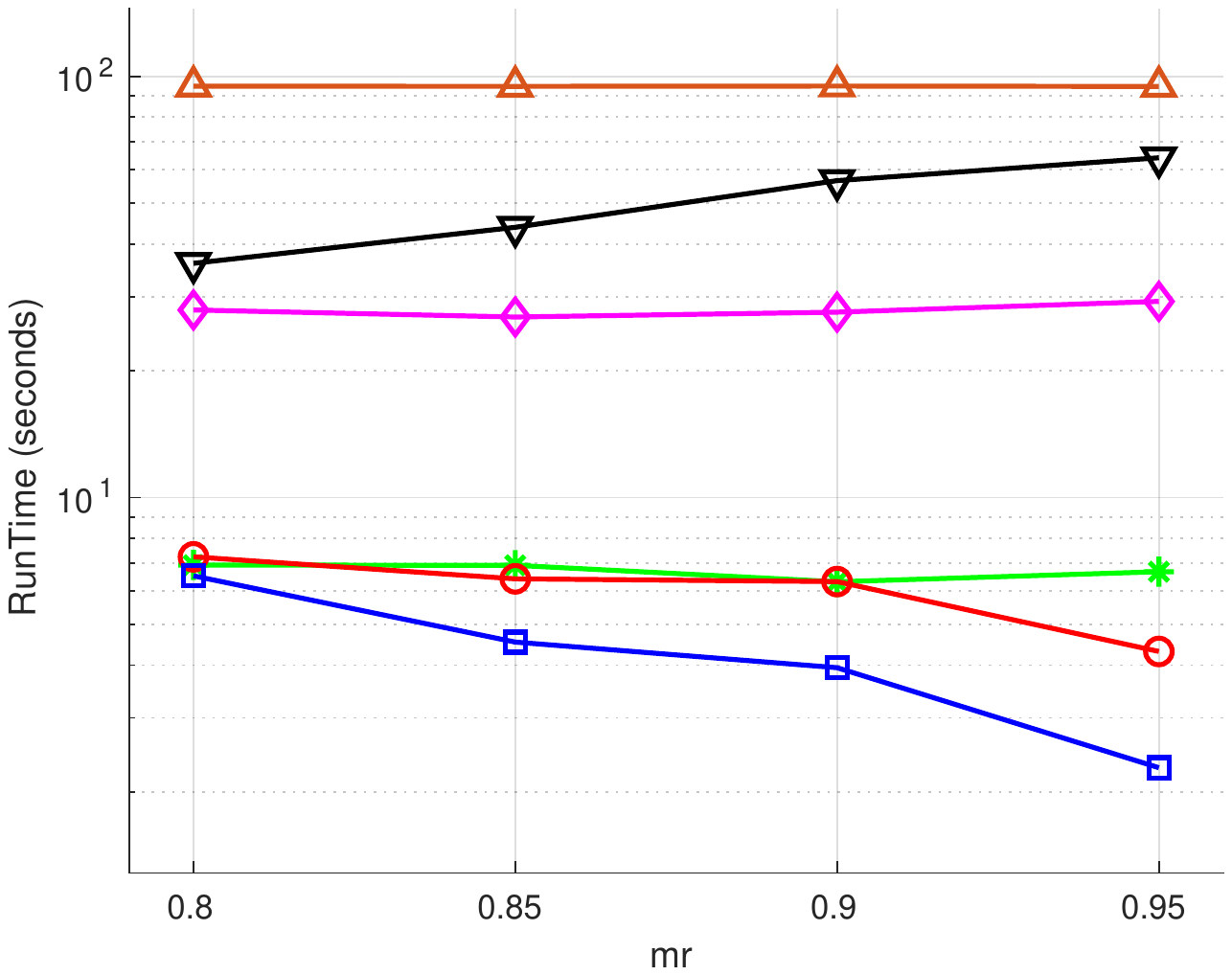}}
\caption{Comparison of  RSE, PSNR and runtime (seconds) on AT\&T ORL images under varying missing ratios.}
\label{plot:ORL}
\end{figure*}

\begin{figure*}
\begin{center}
\begin{minipage}[b]{0.17\textwidth}\centering
\includegraphics[width=1\linewidth]{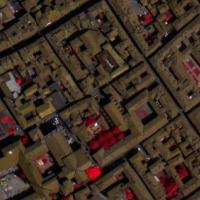}\caption*{Original}
\end{minipage}\vspace{0.1cm}
\begin{minipage}[b]{0.17\textwidth}\centering
\includegraphics[width=1\linewidth]{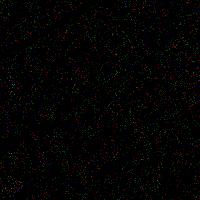}\caption*{Observation}
\end{minipage}
\begin{minipage}[b]{0.17\textwidth}\centering
\includegraphics[width=1\linewidth]{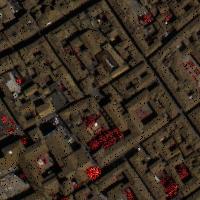}\caption*{HaLRTC}
\end{minipage}
\begin{minipage}[b]{0.17\textwidth}\centering
\includegraphics[width=1\linewidth]{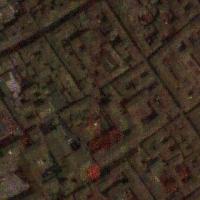}\caption*{SiLRTC-TT}
\end{minipage}\\
\begin{minipage}[b]{0.17\textwidth}\centering
\includegraphics[width=1\linewidth]{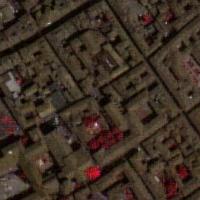}\caption*{TNN}
\end{minipage}
\begin{minipage}[b]{0.17\textwidth}\centering
\includegraphics[width=1\linewidth]{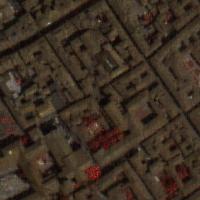}\caption*{TRNNM} 
\end{minipage}
\begin{minipage}[b]{0.17\textwidth}\centering 
\includegraphics[width=1\linewidth]{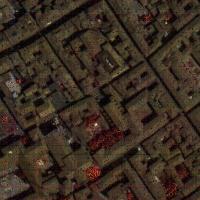}\caption*{FFWTensor}
\end{minipage}
\begin{minipage}[b]{0.17\textwidth}\centering
\includegraphics[width=1\linewidth]{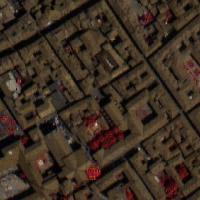}\caption*{{\bf LTRNNFW}} 
\end{minipage}
\caption{The visual results of each algorithm on the HSI data with the uniformly missing ratio of 95\%. The recovery results are shown in RGB format by picking the three bands of 70, 40, 10.}
\label{fig:HSI}
\end{center}
\end{figure*}

\begin{figure*}
\begin{center}
\begin{minipage}[b]{0.17\textwidth}\centering
\includegraphics[width=1\linewidth]{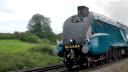}\caption*{Original}
\end{minipage}\vspace{0.1cm}
\begin{minipage}[b]{0.17\textwidth}\centering
\includegraphics[width=1\linewidth]{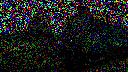}\caption*{Observation}
\end{minipage}
\begin{minipage}[b]{0.17\textwidth}\centering
\includegraphics[width=1\linewidth]{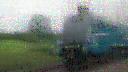}\caption*{HaLRTC}
\end{minipage}
\begin{minipage}[b]{0.17\textwidth}\centering
\includegraphics[width=1\linewidth]{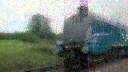}\caption*{SiLRTC-TT}
\end{minipage}\\
\begin{minipage}[b]{0.17\textwidth}\centering
\includegraphics[width=1\linewidth]{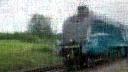}\caption*{TNN}
\end{minipage}
\begin{minipage}[b]{0.17\textwidth}\centering
\includegraphics[width=1\linewidth]{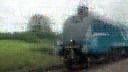}\caption*{TRNNM} 
\end{minipage}
\begin{minipage}[b]{0.17\textwidth}\centering 
\includegraphics[width=1\linewidth]{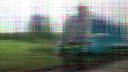}\caption*{FFWTensor}
\end{minipage}
\begin{minipage}[b]{0.17\textwidth}\centering
\includegraphics[width=1\linewidth]{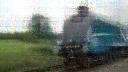}\caption*{{\bf LTRNNFW}} 
\end{minipage}
\caption{The visual results of each algorithm on the Train video with the uniformly missing ratio of 80\%. One frame of the video is picked to show the recovery results.}
\label{fig:Train}
\end{center}
\end{figure*}

\begin{figure*}
\begin{center}
\begin{minipage}[b]{0.2\textwidth}\centering
\includegraphics[width=1\linewidth]{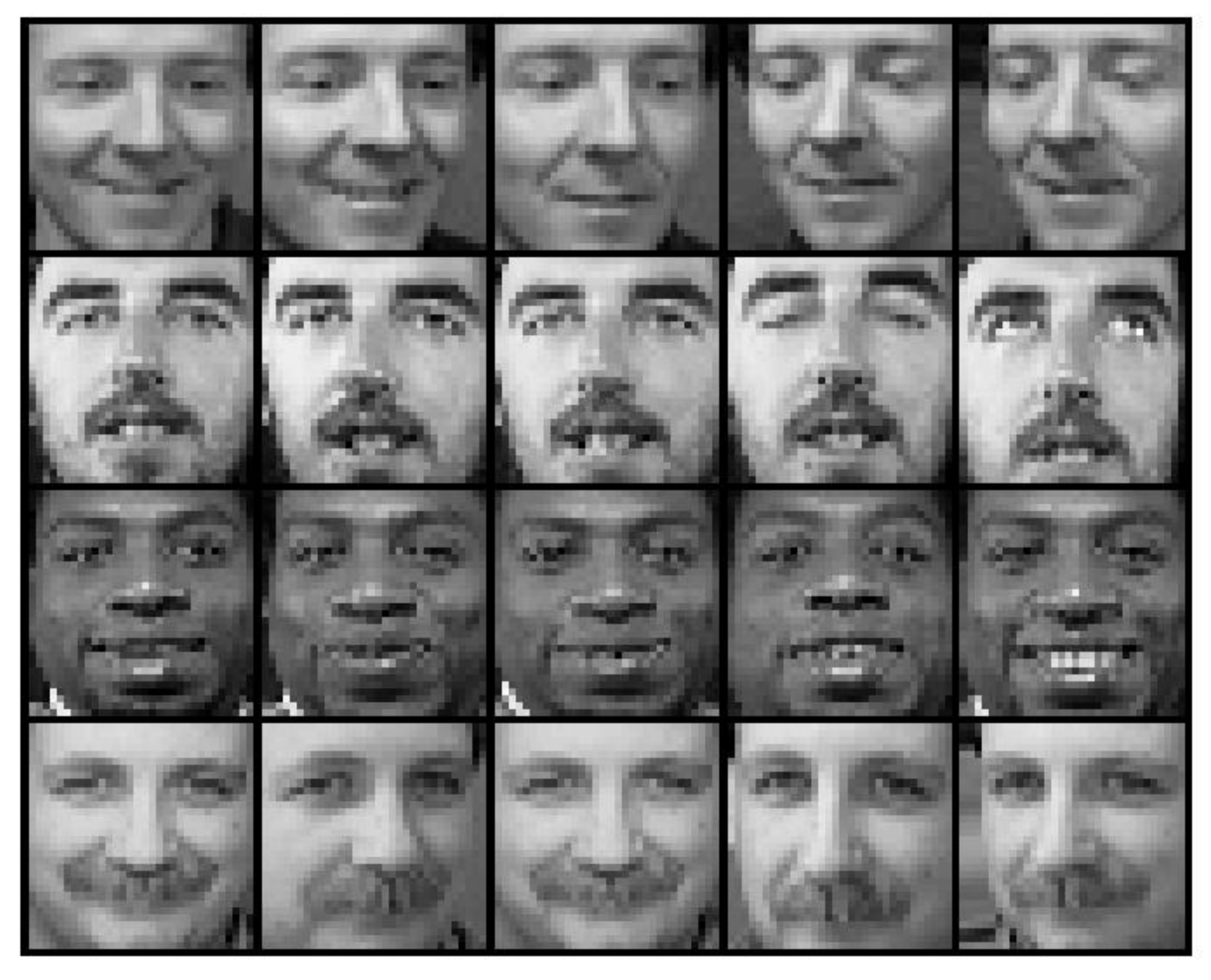}\caption*{\centering Original}
\end{minipage}
\begin{minipage}[b]{0.2\textwidth}\centering
\includegraphics[width=1\linewidth]{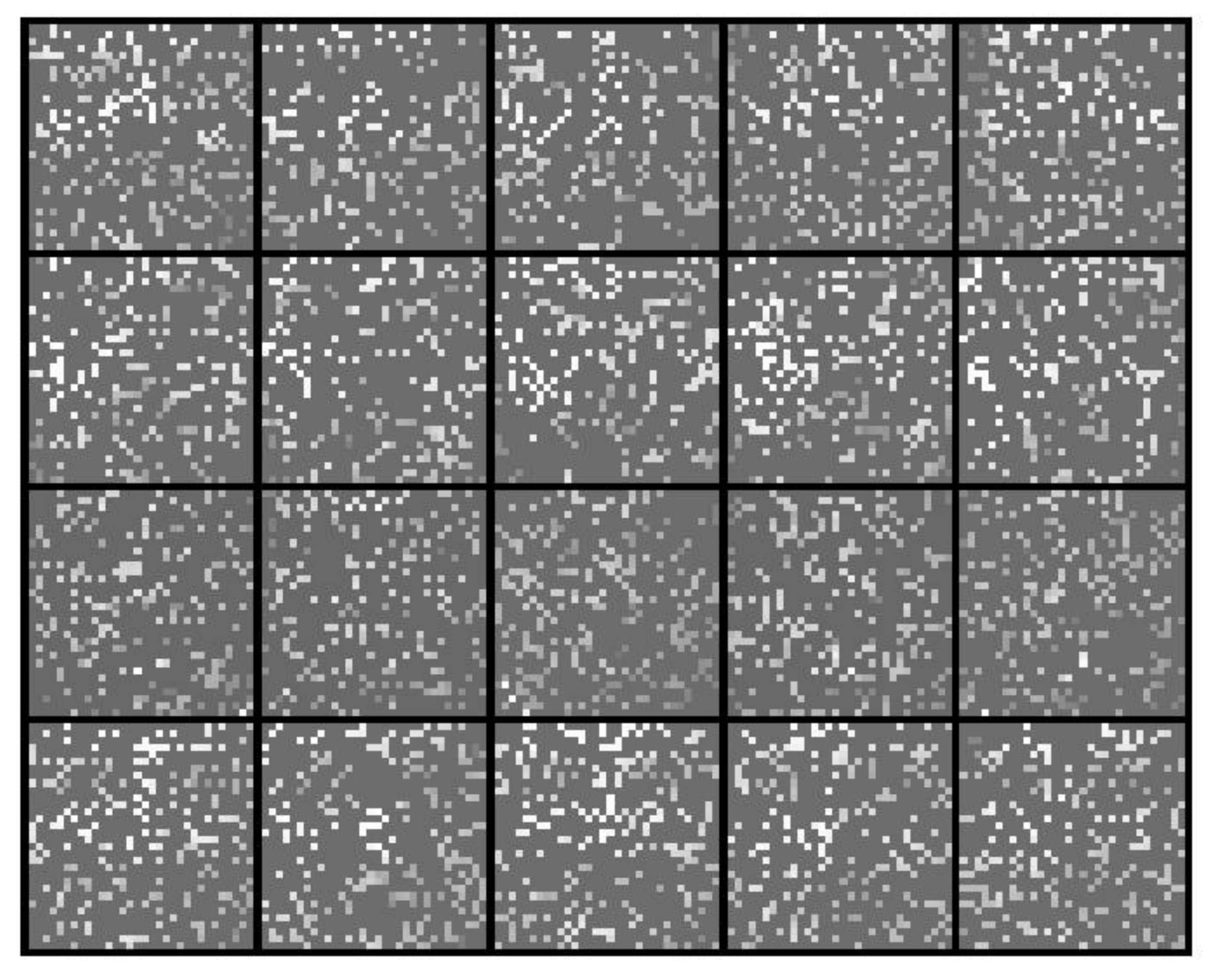}\caption*{Observation}
\end{minipage}
\begin{minipage}[b]{0.2\textwidth}\centering
\includegraphics[width=1\linewidth]{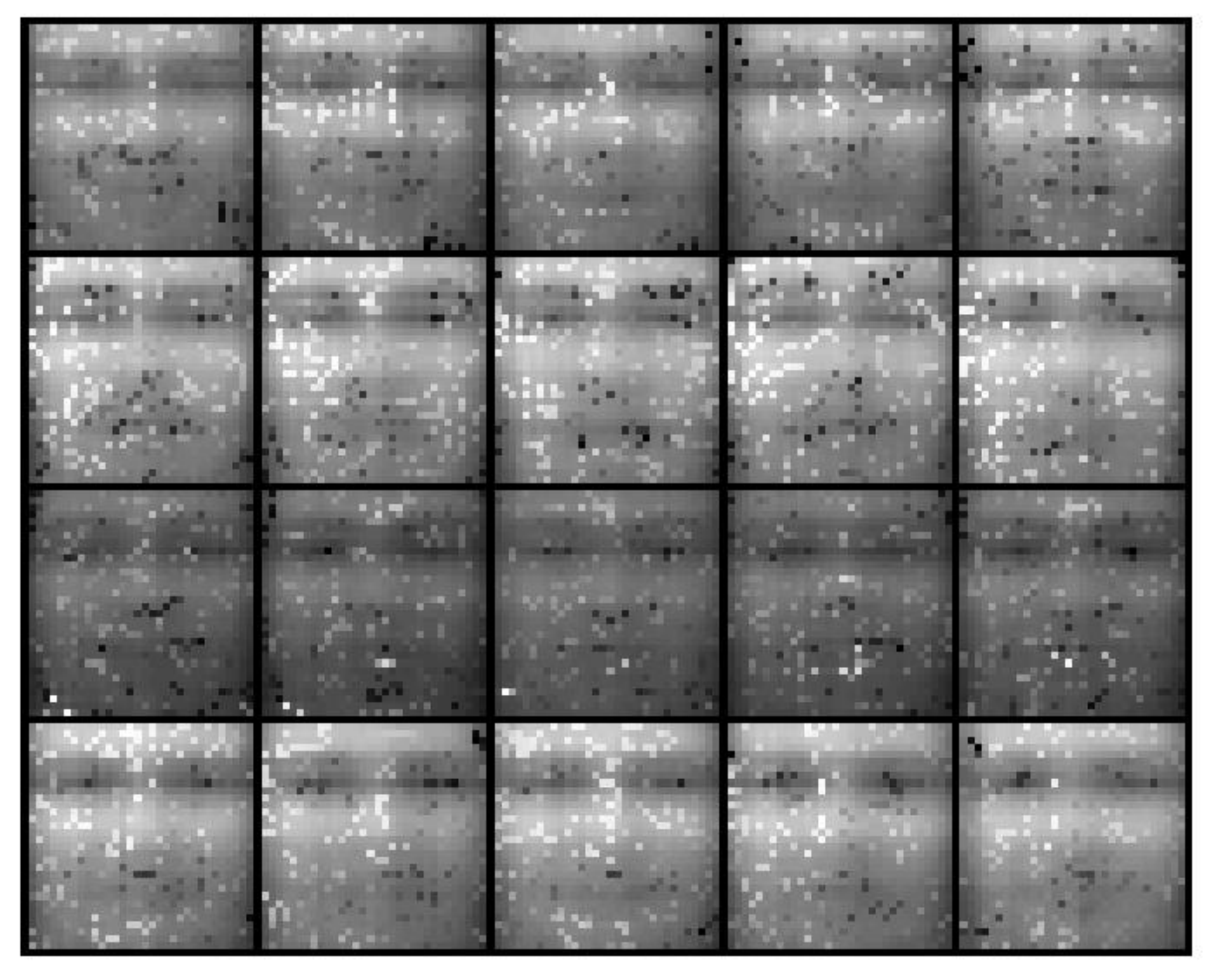}\caption*{HaLRTC}
\end{minipage}
\begin{minipage}[b]{0.2\textwidth}\centering
\includegraphics[width=1\linewidth]{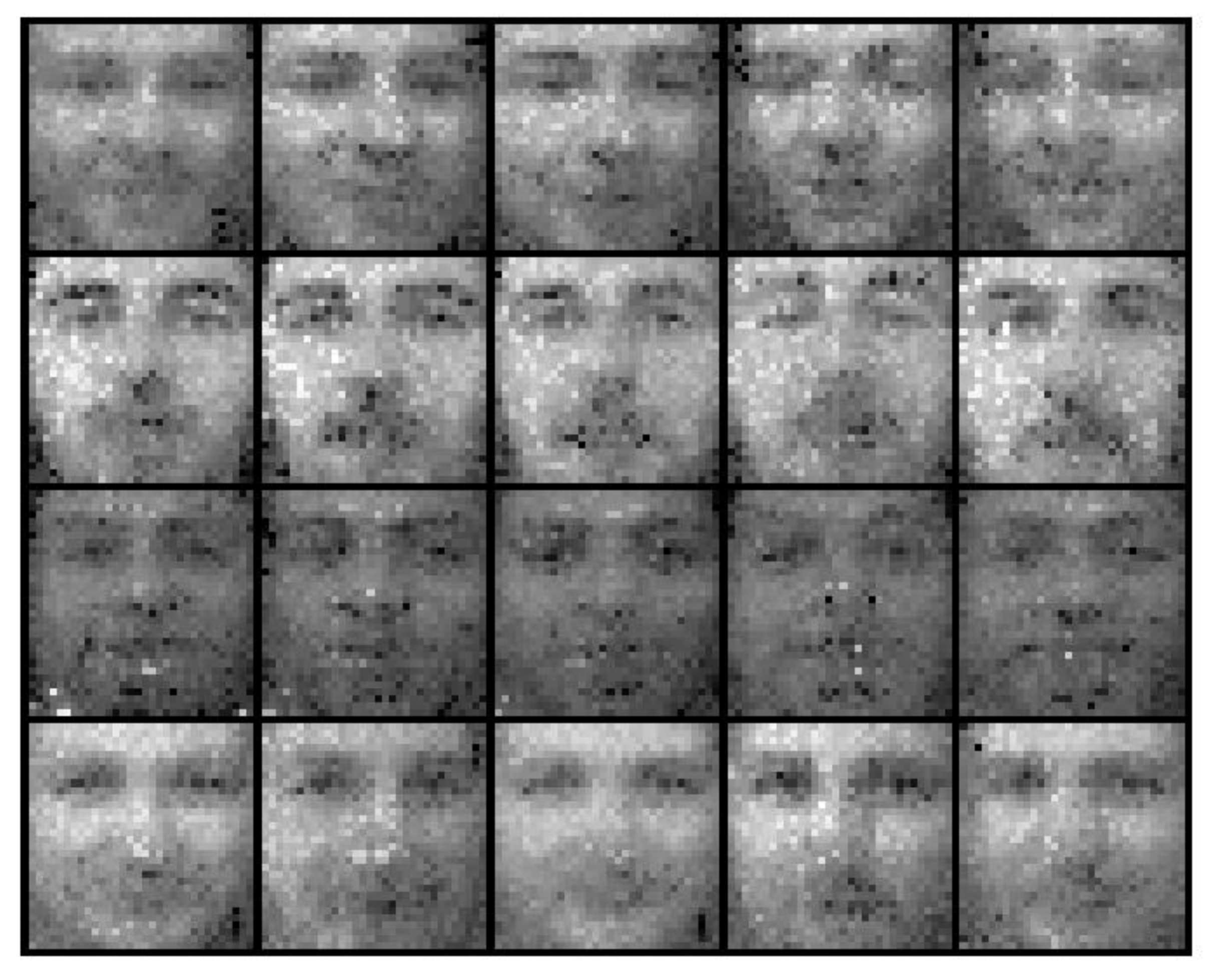}\caption*{SiLRTC-TT}
\end{minipage}\\
\begin{minipage}[b]{0.2\textwidth}\centering
\includegraphics[width=1\linewidth]{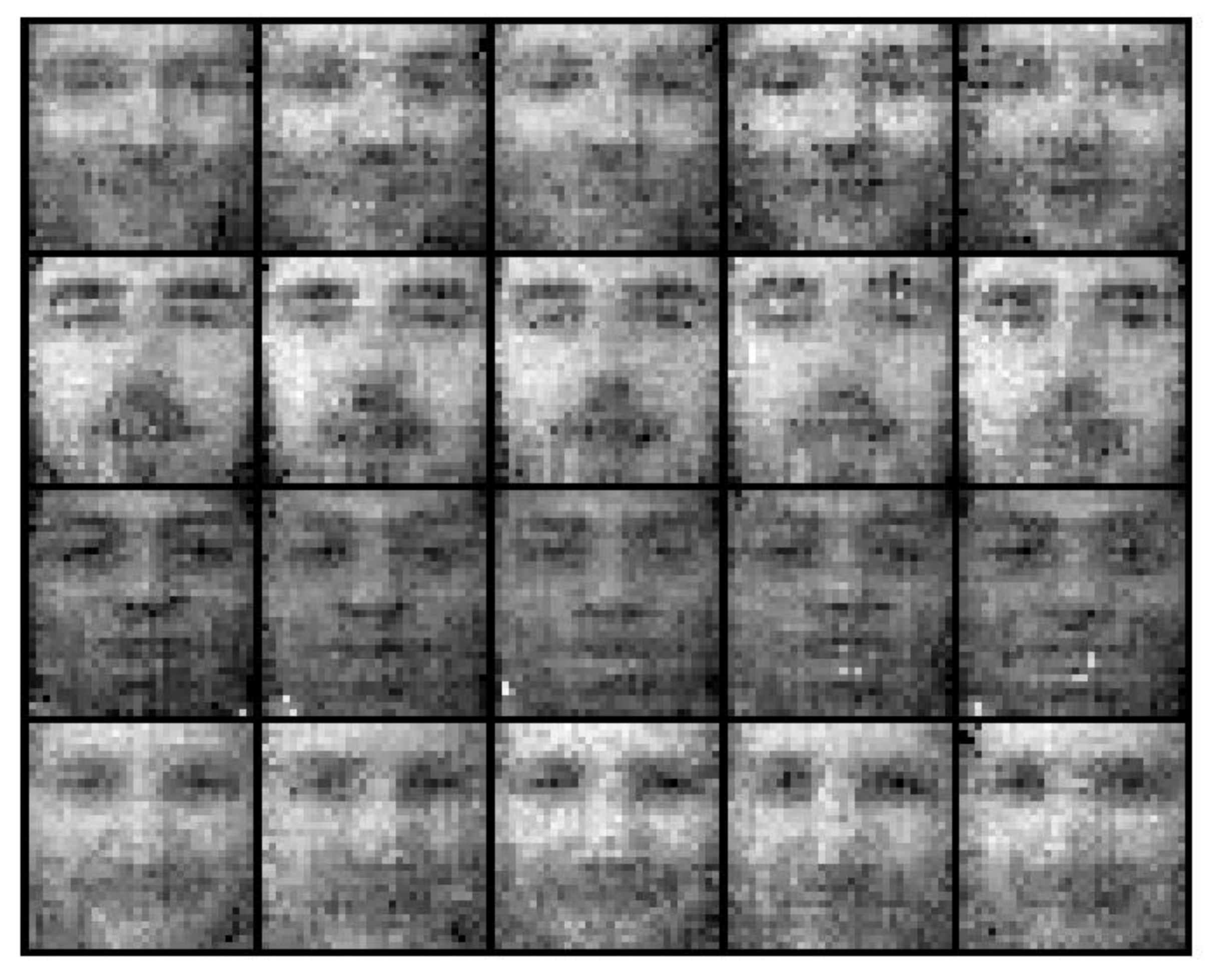}\caption*{TNN}
\end{minipage}
\begin{minipage}[b]{0.2\textwidth}\centering
\includegraphics[width=1\linewidth]{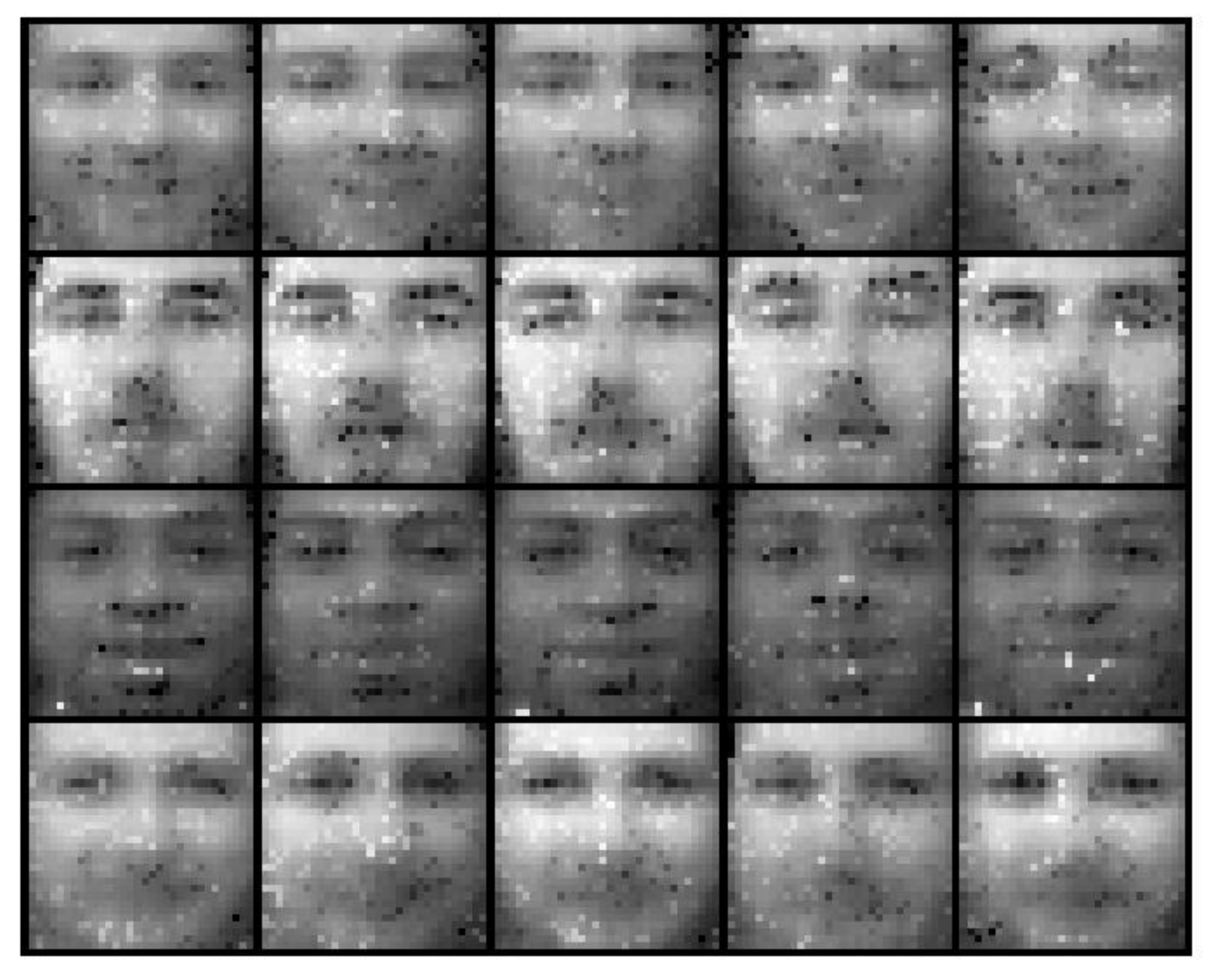}\caption*{TRNNM} 
\end{minipage}
\begin{minipage}[b]{0.2\textwidth}\centering 
\includegraphics[width=1\linewidth]{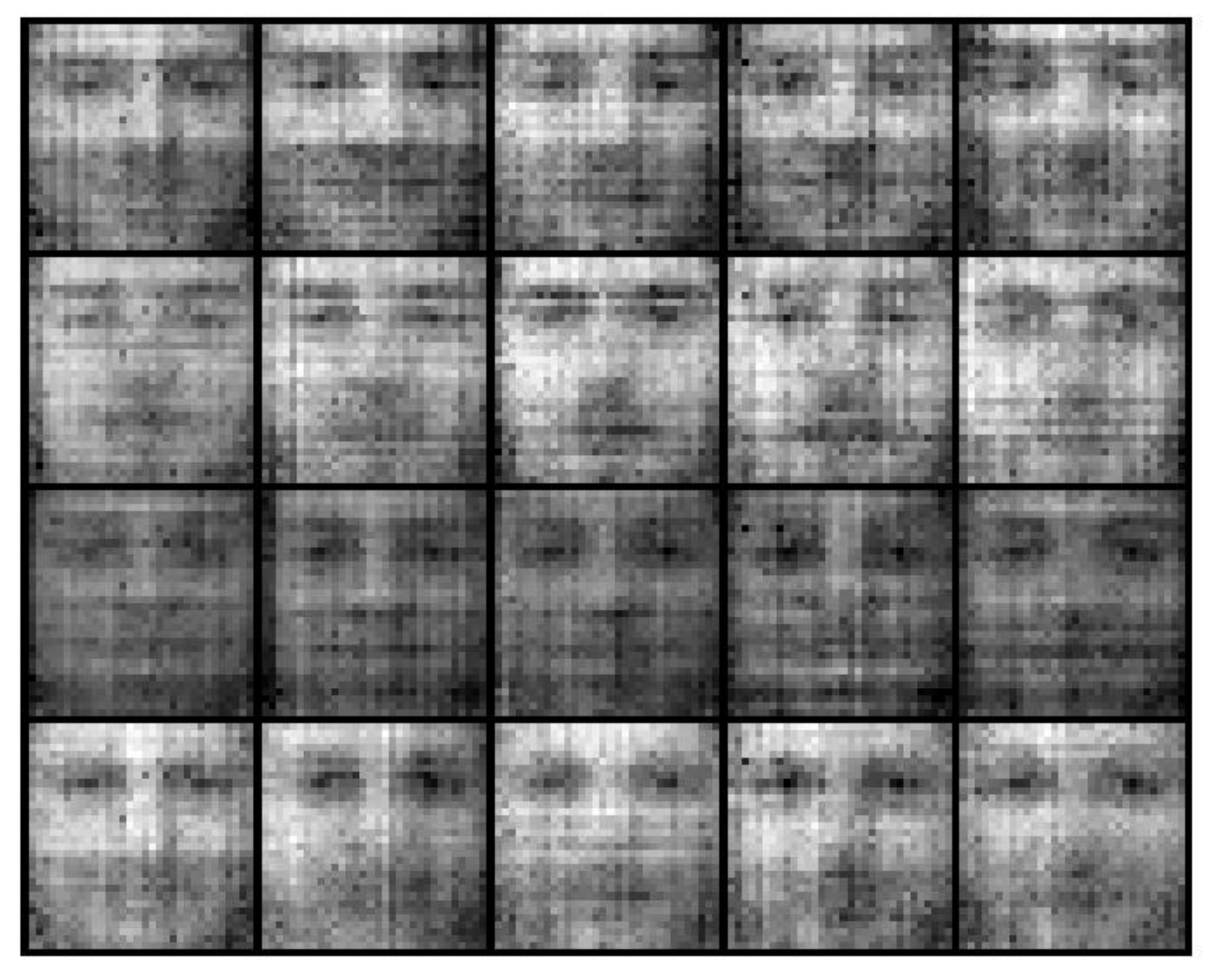}\caption*{FFWTensor}
\end{minipage}
\begin{minipage}[b]{0.2\textwidth}\centering
\includegraphics[width=1\linewidth]{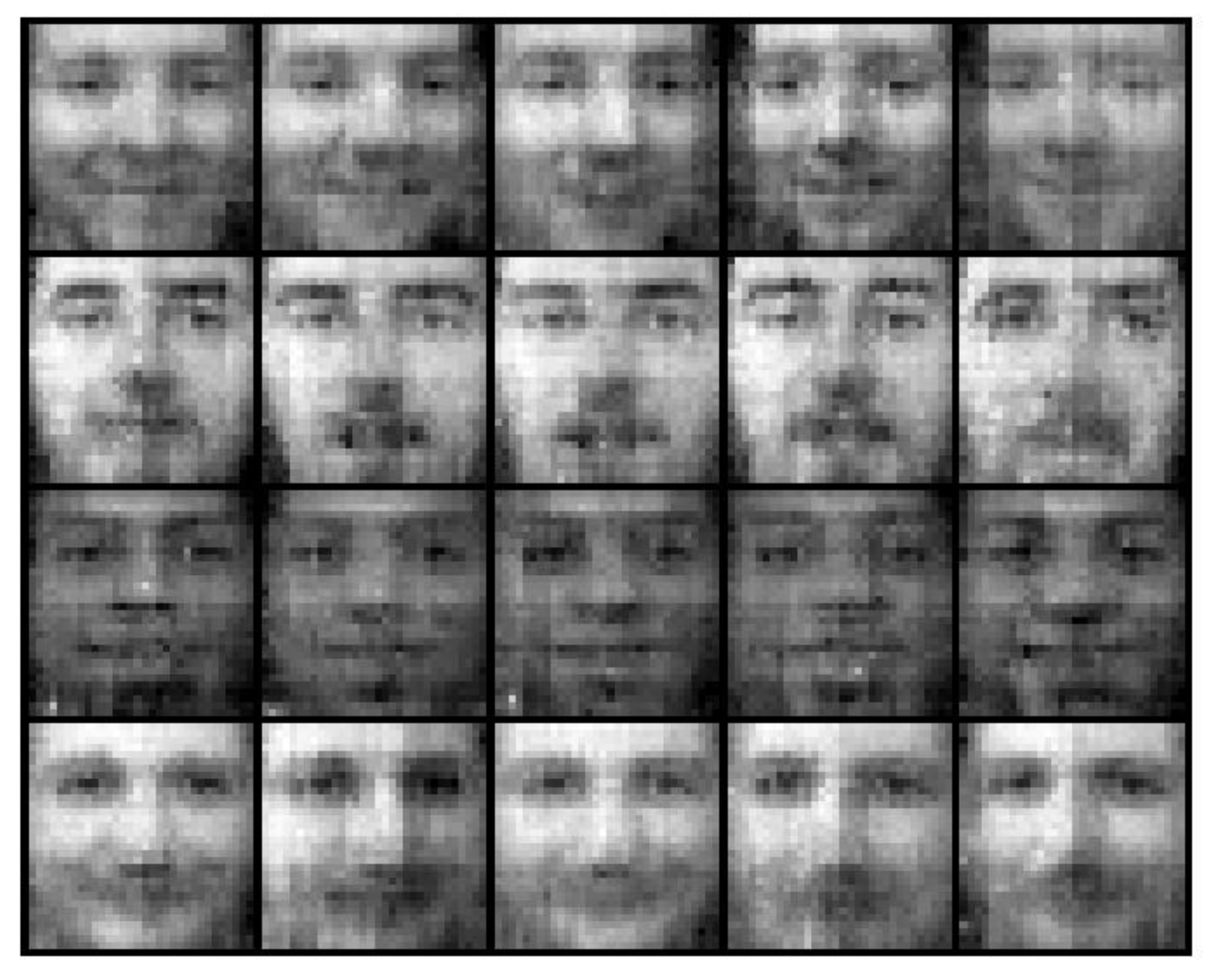}\caption*{{\bf LTRNNFW}} 
\end{minipage}
\caption{The visual results of each algorithm on the AT\&T ORL images with the uniformly missing ratio of 80\%. 20 images are picked to show the recovery results.}
\label{fig:ORL}
\end{center}
\end{figure*}

\section{Conclusion}\label{conclusion}
In this paper, a new latent nuclear norm equipped with a more balanced unfolding scheme is defined for low-rank regularization, and an efficient Frank-Wolfe algorithm is developed for optimization by utilization of sparsity structure and rank-one SVD operation. We theoretically analyze that the proposed method is much more efficient over other norm-based methods in terms of both time and space, which is important for the memory-limited equipment in practical applications. Furthermore, extensive experimental results confirm that the proposed method can achieve state-of-the-art performance in visual-data inpainting at smaller costs of time and space.

\newpage
\bibliographystyle{ieeetr}
\bibliography{NTRARbib}

\end{document}